\documentclass[runningheads]{llncs}

\usepackage{eccv}
\usepackage{prettyref}
\usepackage{mathrsfs}
\usepackage{graphicx}
\usepackage{wrapfig}
\usepackage{subcaption}
\usepackage{bbold}
\usepackage{tabu}
\usepackage{multirow}
\usepackage{booktabs}
\usepackage{tabularx}
\usepackage{booktabs}
\usepackage{pifont}  % for \checkmark
\usepackage{amsmath,amsfonts,amssymb}
\usepackage{algorithm}

\usepackage[table]{xcolor}
\usepackage[noend]{algpseudocode}
\usepackage[colorlinks,allcolors=gray,hypertexnames=true]{hyperref} 
\newrefformat{fig}{Figure~\ref{#1}}
\newrefformat{par}{Section~\ref{#1}}
\newrefformat{appen}{Appendix~\ref{#1}}
\newrefformat{sec}{Section~\ref{#1}}
\newrefformat{sub}{Section~\ref{#1}}
\newrefformat{table}{Table~\ref{#1}}
\newrefformat{ass}{Assumption~\ref{#1}}
\newrefformat{alg}{Algorithm~\ref{#1}}
\newrefformat{def}{Definition~\ref{#1}}
\newrefformat{thm}{Theorem~\ref{#1}}
\newrefformat{cor}{Corollary~\ref{#1}}
\newrefformat{lem}{Lemma~\ref{#1}}
\newrefformat{step}{Step~\ref{#1}}
\newrefformat{ln}{Line~\ref{#1}}
\newrefformat{rem}{Remark~\ref{#1}}
\newrefformat{eq}{Equation~\ref{#1}}
\newrefformat{pb}{Problem~\ref{#1}}
\newrefformat{it}{Item~\ref{#1}}
\newrefformat{te}{Term~\ref{#1}}
\def\Eqref Eq:#1:{\eqref{eq:#1}}
\newrefformat{Eq}{Equation~\Eqref#1:}

\newcommand{\argminH}{\mathbb{argmin}}

\newcommand{\ours}{{PoseShield}\xspace}
\definecolor{myred}{RGB}{179,64,64}
\definecolor{mygreen}{RGB}{64,179,64}

\usepackage{eccvabbrv}

\usepackage{graphicx}
\usepackage{booktabs}
\newtheorem{assumption}{Assumption}

\usepackage[accsupp]{axessibility}  % Improves PDF readability for those with disabilities.

\usepackage{hyperref}

\usepackage{orcidlink}

\begin{document}

% ---------------------------------------------------------------
\title{PoseShield: Neural Collision Fields for Human Self-Collision Resolution} 

\titlerunning{PoseShield}

\author{Zhengyuan Li\inst{1}\orcidlink{0009-0000-6376-2780} \and
Zeyun Deng\inst{1} \and
Yifan Shen\inst{3} \and Liangyan Gui\inst{3}\orcidlink{0009-0005-8204-3577} \and Miaolan Xie\inst{1}\orcidlink{0000-0001-8511-9649} \and Joseph Campbell\inst{1}\orcidlink{0000-0002-7924-8548} \and Xifeng Gao\inst{2}\orcidlink{0000-0003-0829-7075} \and Kui Wu\inst{2}\orcidlink{0000-0003-3326-7943} \and Zherong Pan\inst{2}\orcidlink{0000-0001-9348-526X} \and Aniket Bera\inst{1}\orcidlink{0000-0002-0182-6985}}

\authorrunning{Z.~Li et al.}

\institute{Purdue University, West Lafayette IN 47907, USA \and LightSpeed Studios, Bellevue WA 98004, USA \and
University of Illinois Urbana-Champaign, Champaign IL 61820, USA}
\maketitle
\begin{abstract}
Self-collision remains a persistent challenge in SMPL-based human pose estimation and motion generation. Under extreme articulations or stochastic motion synthesis, generated meshes frequently exhibit self-penetrations, leading to physically implausible results. We propose {\ours}, a neural collision constraint defined directly in SMPL pose space. We formulate collision correction as a constrained optimization problem and connect the learned constraint with the Eikonal equation. Enforcing Eikonal regularization ensures non-vanishing gradients near the collision boundary, improving numerical stability and robustness of the optimization process. Unlike prior methods that operate in the mesh space or rely on heuristic penalties, our approach operates directly in the low-dimensional space of human poses and is theoretically grounded. The same learned constraint extends to human motion sequences, providing a generator-agnostic post-hoc collision corrector without retraining the underlying motion model. Experiments on a newly constructed SMPL pose benchmark show that our method achieves a 95.8\% success rate and outperforms state-of-the-art baselines.
\end{abstract}    
\section{Introduction}
\label{sec:intro}

% background
Parametric human body models such as SMPL~\cite{loper2023smpl}, SMPL+H~\cite{romero2022embodied}, and SMPL-X~\cite{SMPL-X} have become the standard geometric representation in human pose estimation~\cite{kanazawa2018end,bogo2016keep,choutas2022accurate,kocabas2020vibe,zhang2020object,feng2024chatpose,zhang2024rohm} and motion generation~\cite{zhang2024large,hymotion2025,li2023object,lin2023motion}. Thanks to their explicit mesh topology and low-dimensional pose parameterization, these models enable efficient optimization and tight integration with learning-based pipelines. Despite their widespread adoption, \emph{self-collision} remains a persistent challenge.

%problem

Self-collisions arise across diverse SMPL-based applications, ranging from motion-capture-based reconstruction to motion synthesis. For reconstruction, COAP~\cite{mihajlovic2022coap} shows that poses from datasets such as PROX~\cite{PROX:2019} may contain body self-intersections.
For motion synthesis, recent self-collision-aware generation work~\cite{herrmann2025self} further analyzes that modern motion synthesis methods~\cite{tevethuman,guo2024momask} are prone to producing motions with non-negligible self-intersections under stochastic sampling. Such artifacts degrade physical plausibility. Consequently, there is a pressing need for a \emph{reliable, post-hoc self-collision resolution approach}. An ideal approach must decouple collision handling from specific generative priors to act as a universal refinement module, ensuring geometric consistency across diverse SMPL-based scenarios without relying on original generative conditions like textual prompts or reference images.

Classical collision-handling methods have been extensively studied in geometry processing and physical simulation~\cite{harmon2009asynchronous,Li2021CIPC, sassen2024repulsive}. These approaches typically operate directly in \emph{mesh space}, optimizing over vertex positions and leveraging penalty energies or interior-point formulations. While effective in simulation settings where a collision-free reference configuration is available, these methods are not naturally compatible with learning-based SMPL pipelines. In human pose estimation and motion generation, the optimization variable is the \emph{pose parameter} $\boldsymbol{\theta}$ rather than raw mesh vertices. Moreover, a collision-free reference pose is generally unavailable. As a result, classical mesh-space solvers cannot be directly transferred to pose-space constrained optimization.
\begin{figure}[t]
\centering  
\includegraphics[width=0.6\columnwidth]{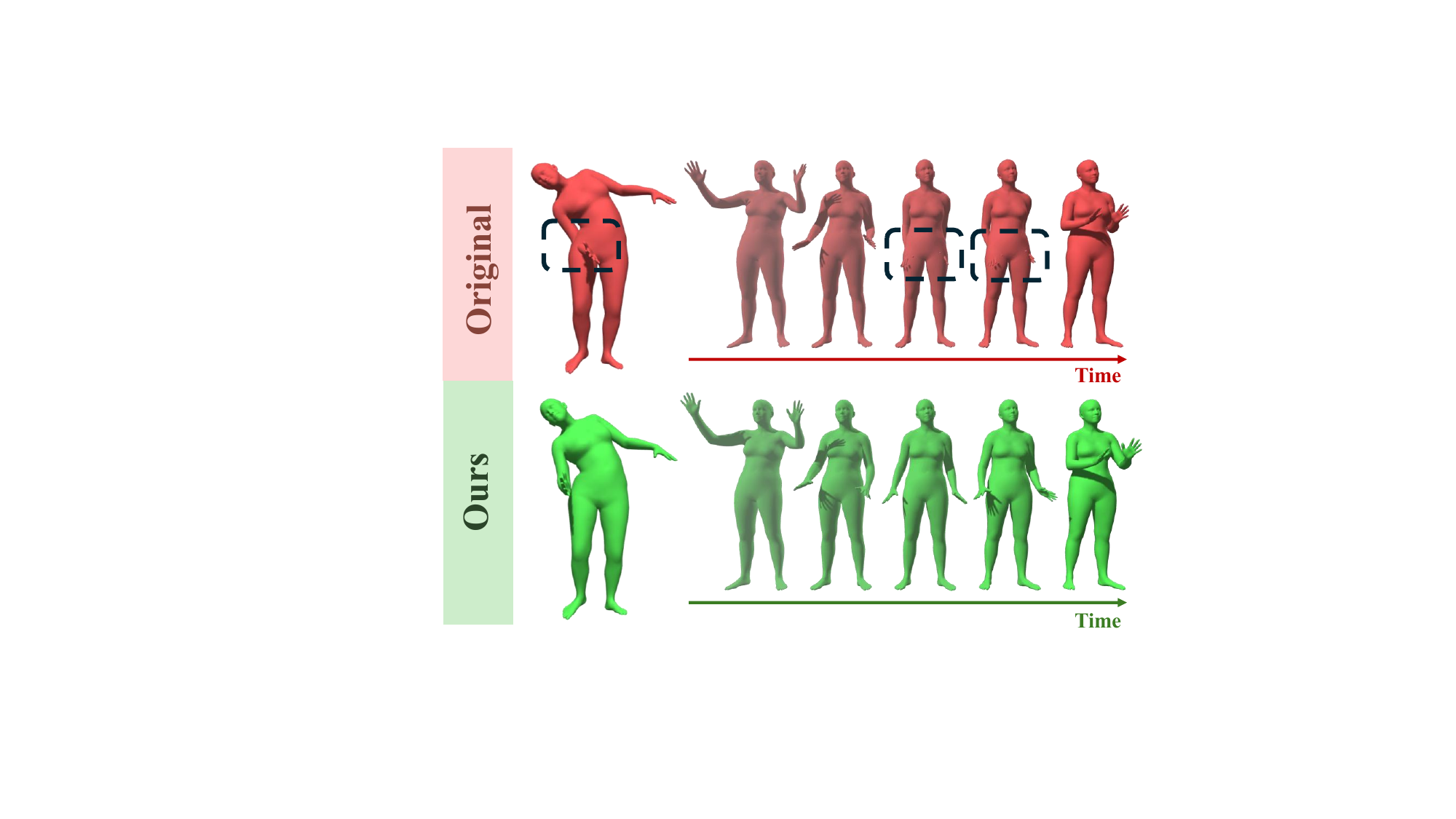}

\caption{\ours provides a robust collision constraint for SMPL-based bodies. While effective for individual human poses, it naturally extends to temporally consistent motion generation.}
\vspace{-6mm}
\label{fig:teaser}
\end{figure}
In the domain of human body modeling, several works incorporate interpenetration penalties as ``soft constraints'' during model fitting or as auxiliary losses in learning-based predictors~\cite{SMPL-X,tzionas2016capturing,guan2009estimating,bogo2016keep,herrmann2025self}. Recent volumetric human modeling approaches~\cite{mihajlovic2022coap,mihajlovic2025volumetricsmpl} learn smooth articulated occupancy representations of posed human bodies. Such models provide continuous geometric fields that can implicitly encode collisions through occupancy evaluation. Other recent learning-based approaches~\cite{tan2022repulsive,tan2022n} train smooth neural collision classifiers and use them as differentiable constraints in post-hoc optimization. Unlike these methods, our approach is motivated by the regularity conditions required by gradient-based constrained optimization algorithms.

In this work, we formulate collision resolution as a constrained optimization problem that searches for the nearest collision-free pose to a self-intersecting configuration. Under this formulation, we introduce \textbf{\ours}, a neural collision constraint function defined directly in SMPL pose space, and solve the resulting problem using well-established gradient-based constrained optimization algorithms such as SLSQP and augmented Lagrangian methods~\cite{nocedal2006numerical}. For \ours to work reliably within these solvers, two requirements must be met. First, its sign must reliably indicate collision status: positive for collision-free poses and negative for self-intersecting ones. Second, these algorithms require the Linear Independence Constraint Qualification (LICQ)~\cite{nocedal2006numerical}, which demands a nonvanishing constraint gradient away from the constraint boundary. Our key insight is the connection between this regularity requirement and the \emph{Eikonal equation}: enforcing an Eikonal regularization on \ours ensures that its gradient norm is bounded away from zero throughout pose space, thereby satisfying LICQ and improving numerical stability. Such a constraint function is guaranteed to exist: the signed distance function to the collision boundary, characterized as the unique viscosity solution of the Eikonal equation~\cite{crandall1992user}. Together, these properties yield a principled, self-contained post-hoc collision resolver for individual SMPL poses. We further extend our framework to human motion generation, enabling post-hoc self-collision correction for human motion sequences without knowing the generator. In summary, our contributions are:
\begin{itemize}
\item We propose a principled theoretical framework for self-collision resolution in SMPL pose space by formulating it as a constrained optimization problem with a learnable differentiable collision constraint. We further show that, under suitable assumptions on the learned collision field, gradient-based constrained solvers admit global and local convergence guarantees for this problem.

\item We introduce \textbf{\ours}, a neural collision constraint trained with Eikonal regularization in pose space, and show how its training objective is designed to make these theoretical assumptions approximately hold in practice. In particular, we prove that the Eikonal training loss bounds the volume of pose-space regions where LICQ fails to hold, providing a quantitative connection between training accuracy and solver reliability.

\item We demonstrate that the learned collision constraint can be seamlessly reused for temporally consistent motion correction without retraining the underlying motion generator, providing \textbf{a generator-agnostic post-processing module for human motion synthesis}.

\item We perform evaluations on the newly curated \textbf{Human with Collisions} (HwC) dataset and the PROX dataset, where \ours substantially outperforms prior post-hoc collision-handling baselines.
\end{itemize}
\section{Related Work}
\paragraph{Human Self-Collision in SMPL-based Modeling.}
Self-collision is a long-standing issue for parametric human models like SMPL/SMPL-X, particularly under extreme articulations or depth ambiguities in monocular reconstruction. Early approaches primarily relied on coarse geometric heuristics, approximating the human body with primitive proxies such as capsules~\cite{bogo2016keep} to simplify intersection queries. Subsequently, the field shifted toward mesh-level penetration penalties. For example, SMPLify-X~\cite{SMPL-X} adapted differentiable self-interpenetration terms for expressive body capture, often implemented via distance-field-style losses~\cite{tzionas2016capturing,ballan2012motion} and accelerated intersection detection~\cite{Karras:2012:MPC:2383795.2383801}. This paradigm has been extended to scene-aware constraints in PROX~\cite{PROX:2019}. For motion generation, recent work~\cite{herrmann2025self} employs efficient sphere-based proxies to incorporate self-intersection losses during the training of human motion models~\cite{tevethuman,guo2024momask}. Recently, implicit representations like COAP~\cite{mihajlovic2022coap} and VolumetricSMPL~\cite{mihajlovic2025volumetricsmpl} have advanced the state-of-the-art by demonstrating that learned occupancy fields can effectively reduce self-intersections through gradient-based refinement. While effective in reducing interpenetration for interaction modeling, these approaches are not explicitly designed to provide regularity guarantees for post-hoc constrained optimization. Relevantly, implicit neural representations have also been explored for modeling geometric or interaction constraints in the pose space~\cite{kulkarni2024nifty}. Our work instead focuses on learning a collision constraint with properties tailored to support stable and theoretically grounded pose-space optimization.
\paragraph{General Mesh Collision Resolution.}
Classical collision-handling techniques, including penalty-based energies~\cite{fisher2001deformed,chen2023shortest,chen2025offset}, interior-point formulations~\cite{harmon2009asynchronous,Li2021CIPC,fang2021guaranteed,huang2025intersection}, and geometric frameworks like Repulsive Shells~\cite{sassen2024repulsive}, share a fundamental limitation: they operate directly in mesh space by optimizing raw vertex positions. This makes them neither directly applicable nor computationally practical in pose-space optimization. Furthermore, these methods often necessitate a collision-free reference configuration, which is generally unavailable in modern pose-prediction pipelines~\cite{tan2021lcollision,tan2022n,tevethuman,zhang2024rohm}. While specialized neural architectures have addressed collisions in specific domains, such as ContourCraft~\cite{grigorev2024contourcraft} and Self-Supervised Collision Handling~\cite{santesteban2021self} for garments, or Quaffure~\cite{stuyck2025quaffure} for hair simulation, these solutions are tailored to their respective generative priors and do not generalize to the high-dimensional articulated manifold of human bodies. Ultimately, existing neural classifiers used for post-hoc resolution~\cite{tan2021lcollision,tan2022n,zesch2023neural} often learn decision boundaries or proxy penetration depth signals for gradient guidance, but do not enforce global regularity conditions needed for robust constrained optimization.
% provide only a binary or heuristic signal that lacks the necessary continuity and theoretical convergence guarantees for robust constrained optimization.
% Such a lack of mathematical structure often leads to numerical instability or failure to find a feasible solution in complex self-penetrating scenarios.

\paragraph{Neural Solution of the Eikonal Equation.}
Solving the Eikonal equation~\cite{sethian1996fast,zhao2005fast} subject to a sign constraint in 3D space corresponds to computing the Signed Distance Field (SDF) of an arbitrary shape. Thanks to its expressive power, the neural approximation of SDFs has become central to modern 3D generative models~\cite{park2019deepsdf,li2023diffusion,yariv2024mosaic}. Although the SDF can represent highly complex geometry, these methods all assume a low-dimensional underlying domain—namely, the standard 3D Euclidean space. More recently, some research~\cite{ni2023ntfields} demonstrated configuration-space distance fields whose gradients guide collision-free planning. Closer to our setting, PoseNDF~\cite{tiwari22posendf} and NRDF~\cite{he2024nrdf} also learn SDF-style neural fields on the articulated pose manifold $SO(3)^K$, using their zero level-set to encode the data-driven manifold of \emph{plausible} poses and serving as generic priors for tasks such as denoising and inverse kinematics. In contrast, we model a specific geometric constraint, self-collision, and design the field for constrained pose-space optimization.
\section{Problem: SMPL Self-Collision Resolution}
\label{sec:problem}
In this section, we define the collision-resolution problem for the SMPL family of parametric human body models~\cite{loper2023smpl,SMPL-X}. 
An SMPL mesh is defined by shape parameters $\boldsymbol{\beta} \in \mathbb{R}^{d_\beta}$ and pose parameters $\boldsymbol{\theta} \in \mathbb{R}^{d_\theta}$. 
Given $(\boldsymbol{\beta}, \boldsymbol{\theta})$, the SMPL function produces a mesh:
\[
X = \mathcal{M}(\boldsymbol{\beta}, \boldsymbol{\theta}),
\]
where the mesh connectivity $\mathcal{T}$ is predefined and independent of
$\boldsymbol{\beta}$ and $\boldsymbol{\theta}$.
Due to imperfect motion capture or errors introduced by neural motion predictors~\cite{tevethuman}, the generated SMPL meshes may exhibit self-collisions. In many practical scenarios, such as motion correction, the body shape remains fixed across frames. Therefore, we assume the shape parameter $\boldsymbol{\beta}$ is fixed and only optimize the pose parameter $\boldsymbol{\theta}$. We also ignore the global translation and rotation, since self-collision is invariant to them. To characterize whether a posed SMPL mesh is collision-free, we employ an exact mesh self-intersection test.
Concretely, given a pose $\boldsymbol{\theta}$, we first decode it into a mesh $X=\mathcal{M}(\boldsymbol{\beta},\boldsymbol{\theta})$ and then apply a classical collision detector (e.g., FCL~\cite{pan2012fcl}) to obtain a binary collision indicator defined as:
\[
\iota\big(\mathcal{M}(\boldsymbol{\beta},\boldsymbol{\theta})\big)\in\{-1,+1\},
\]
which equals $-1$ if the mesh exhibits self-penetration and $+1$ otherwise. Given a self-colliding SMPL configuration $(\boldsymbol{\beta},\boldsymbol{\theta}_0)$, our goal is to find a corrected pose $\boldsymbol{\theta}$ such that the decoded mesh:
\[
X=\mathcal{M}(\boldsymbol{\beta},\boldsymbol{\theta}),
\]
is collision-free while remaining visually close to the original configuration. Let $d_{\mathrm{SMPL}}(\cdot,\cdot)$ denote a distance function that measures the discrepancy between two SMPL poses of a fixed shape. We formulate SMPL self-collision resolution as the following constrained optimization problem:
\begin{equation}
\label{eq:smpl_collisionresolution}
\boldsymbol{\theta}^\star
=
\arg\min_{\boldsymbol{\theta}}
d_{\mathrm{SMPL}}(\boldsymbol{\theta},\boldsymbol{\theta}_0)
\quad
\text{subject to}
\quad
\iota\big(\mathcal{M}(\boldsymbol{\beta},\boldsymbol{\theta})\big)=+1.
\end{equation}
\section{Learning a Differentiable Collision Constraint}
As discussed in \cref{sec:problem}, the exact collision indicator $\iota(\cdot)$ provides a binary feasibility test. However, $\iota(\cdot)$ is non-differentiable with respect to the pose parameter $\boldsymbol{\theta}$, making gradient-based constrained optimization intractable. Modern constrained optimization algorithms, including Sequential Least Squares Programming (SLSQP), modified differential multiplier methods, and augmented Lagrangian approaches, require at least $C^1$ smooth constraint functions in order to compute well-defined gradients. Constraint functions must satisfy certain constraint qualifications such as LICQ for the iterates to reliably converge to a KKT point~\cite{bertsekas1997nonlinear}. Since $\iota(\cdot)$ yields only binary feasibility labels and is discontinuous with zero gradient almost everywhere, these methods cannot be applied.

To enable robust constrained optimization in pose space, we introduce a surrogate, learnable neural constraint function $g(\boldsymbol{\theta})$, whose superlevel set 
$\{\boldsymbol{\theta} \mid g(\boldsymbol{\theta}) \ge 0\}$ 
approximates the set of collision-free SMPL poses under a fixed shape $\boldsymbol{\beta}$. With the surrogate function and given a self-colliding initial pose $\boldsymbol{\theta}_0$, the optimization problem becomes:
\begin{equation}
\label{eq:smpl_collisionresolution_smooth_main}
\boldsymbol{\theta}^\star
=
\arg\min_{\boldsymbol{\theta}}
d_{\mathrm{SMPL}}(\boldsymbol{\theta},\boldsymbol{\theta}_0)
\quad
\text{subject to}
\quad
g(\boldsymbol{\theta}) \ge C_l .
\end{equation}

where $C_l$ is a margin threshold. Theoretically, $C_l = 0$ is the optimal choice assuming a perfectly learned constraint $g$. In practice, however (as detailed in \cref{sec:ablation}), we demonstrate that adjusting this threshold facilitates a controllable trade-off between maintaining geometric fidelity to the input pose and the effectiveness of collision resolution.

Our method is illustrated in \cref{fig:pipeline}. The neural collision-handling process operates in a latent space and follows a standard constrained optimization formulation, with $g$ serving as a differentiable constraint function. We first analyze, in \cref{sec:theory}, the convergence behavior of standard gradient-based constrained solvers on this formulation under three assumptions on $g$. We then introduce in \cref{sec:eikonal} our Eikonal regularization, which is designed to encourage the most non-trivial of these assumptions. A complete theoretical analysis is in the supplement. Building on these theoretical insights, \cref{sec:training} and \cref{sec:td} detail the practical training strategies for learning the differentiable collision constraint $g$. Finally, in \cref{sec:motion}, we generalize our framework from static pose optimization to accommodate continuous, temporally consistent human motion sequences.

\begin{figure*}[t] % or [htbp]
\centering
\includegraphics[width=0.9\linewidth]{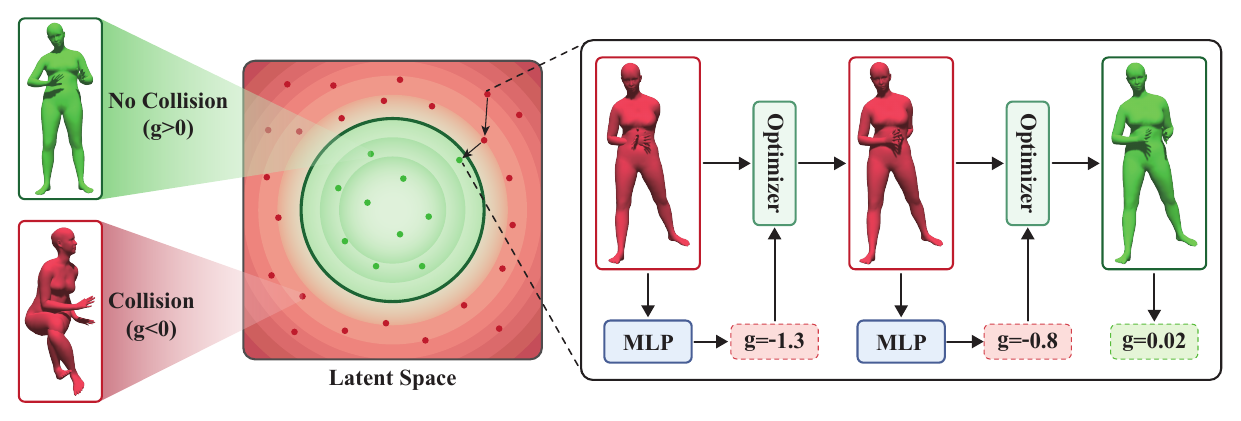}
\caption{\textbf{Neural collision handling with \ours}. \ours approximates the signed distance function (SDF) to the boundary between colliding and collision-free regions in the latent space. In practice, collision resolution algorithms optimize a self-penetrating sample toward its nearest point in the collision-free region, where the learned neural field provides local gradient guidance throughout the optimization process.}
\label{fig:pipeline}
\end{figure*}

\subsection{Convergence Analysis}
\label{sec:theory}

We analyze the convergence behavior of standard gradient-based constrained solvers on the surrogate problem in~\eqref{eq:smpl_collisionresolution_smooth_main}. Let $\Omega \subset [-1,1]^{d_\theta}$ denote a bounded region of plausible SMPL poses in the 6D rotation representation. For the analysis, we set the threshold $C_l = 0$ and adopt the squared-Euclidean distance $d_{\mathrm{SMPL}}(\boldsymbol{\theta}, \boldsymbol{\theta}_0) = \|\boldsymbol{\theta} - \boldsymbol{\theta}_0\|^2$. We further assume the minimizer is interior to $\Omega$ and non-degenerate, so that $g \geq 0$ is the only active constraint.

Classical nonlinear optimization theory~\cite{nocedal2006numerical} establishes strong convergence guarantees for gradient-based constrained solvers under appropriate regularity conditions on the constraint function. We state three assumptions on the learned field $g$ that together suffice for these guarantees in our setting.

\begin{assumption}[Smoothness]\label{asmp:smoothness}
$g$ is $C^2$ on $\Omega$ with Lipschitz-continuous gradient and Hessian.
\end{assumption}

\begin{assumption}[Feasibility consistency]\label{asmp:feasibility}
The sign of $g$ correctly identifies the collision status: $g(\boldsymbol{\theta}) \geq 0$ if and only if $\boldsymbol{\theta}$ is a collision-free pose.
\end{assumption}

\begin{assumption}[Approximate Eikonal property]\label{asmp:approx_eikonal}
There exists $\delta \in [0,1)$ such that $1 - \delta \leq \|\nabla_{\boldsymbol{\theta}} g(\boldsymbol{\theta})\|_2 \leq 1 + \delta$ for all $\boldsymbol{\theta} \in \Omega$.
\end{assumption}

The first assumption ensures that gradient-based methods are well-defined. The second ensures that the learned constraint faithfully represents the collision-free set. The third makes $g$ behave like an approximate signed distance function in pose space; in particular, $\|\nabla g\| \geq 1 - \delta > 0$ implies that the Linear Independence Constraint Qualification (LICQ) holds globally. Under these three assumptions, standard gradient-based constrained solvers (e.g., SLSQP) admit both global and local convergence guarantees:

\begin{theorem}[Global Convergence and Complexity]\label{thm:global}
Consider problem~\eqref{eq:smpl_collisionresolution_smooth_main} under Assumptions~\ref{asmp:smoothness},~\ref{asmp:feasibility}, and~\ref{asmp:approx_eikonal}. Then:
\begin{enumerate}
\item[\textup{(i)}] \textbf{Global LICQ:} The constraint qualification holds globally on $\Omega$.
\item[\textup{(ii)}] \textbf{Global Convergence:} From any starting pose $\boldsymbol{\theta}_{0} \in \Omega$, a standard line-search SQP method with an $\ell_1$ merit function (and sufficiently large penalty parameter) produces iterates whose every accumulation point is a first-order KKT point $(\boldsymbol{\theta}^\star, \lambda^\star)$.
\item[\textup{(iii)}] \textbf{Iteration Complexity:} An $\varepsilon$-approximate KKT point---satisfying
$$\bigl\| 2(\boldsymbol{\theta}_k - \boldsymbol{\theta}_0) - \lambda_k \nabla_{\boldsymbol{\theta}} g(\boldsymbol{\theta}_k) \bigr\| \leq \varepsilon, \quad |\min(0,\, g(\boldsymbol{\theta}_k))| \leq \varepsilon, \quad \lambda_k \geq 0, \quad |\lambda_k\, g(\boldsymbol{\theta}_k)| \leq \varepsilon,$$
is no harder to obtain than in unconstrained smooth optimization, whose worst-case first-order complexity is $\mathcal{O}(\varepsilon^{-2})$.
\end{enumerate}
\end{theorem}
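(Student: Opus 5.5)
The plan treats the three items of \cref{thm:global} in order, each as a direct corollary of a classical result once the hypotheses are checked against Assumptions~\ref{asmp:smoothness}--\ref{asmp:approx_eikonal}.

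For (i), we are told there is a single inequality constraint $g(\boldsymbol{\theta})\ge 0$ and that the minimizer is interior to $\Omega$. LICQ at a point $\boldsymbol{\theta}$ where the constraint is active then only asks that the one vector $\nabla g(\boldsymbol{\theta})$ be linearly independent, i.e.\ nonzero. Assumption~\ref{asmp:approx_eikonal} gives $\|\nabla g(\boldsymbol{\theta})\|\ge 1-\delta>0$ on all of $\Omega$, so LICQ holds at every feasible point, whether or not the constraint is active. This is the same as MFCQ here, and it also holds uniformly, which the later steps need. We will also note two consequences. First, Assumption~\ref{asmp:feasibility} identifies the feasible set with the collision-free poses. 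Second, since $\nabla g\ne 0$ on the boundary, the zero level set $\{g=0\}$ is a $C^2$ hypersurface by the implicit function theorem.

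For (ii), we plan to invoke the standard global convergence theorem for line-search SQP with the $\ell_1$ merit function $\phi_\mu(\boldsymbol{\theta})=\|\boldsymbol{\theta}-\boldsymbol{\theta}_0\|^2+\mu\,|\min(0,g(\boldsymbol{\theta}))|$ (Nocedal--Wright, Thm.~18.3 / Han's theorem). Its hypotheses are:
\begin{itemize}
\item $f$ and $g$ are $C^1$ with Lipschitz gradients on a bounded set containing the iterates, which follows from Assumption~\ref{asmp:smoothness} and the boundedness of $\Omega$;
\item the Hessian approximations $B_k$ are uniformly positive definite and bounded, which we impose on the method, e.g.\ exact $\nabla^2_{xx}L$ with damping, or damped BFGS;
\item the QP subproblems are feasible and the multipliers stay bounded, so that a fixed $\mu>\sup_k\lambda_k$ eventually works.
\end{itemize}
The third point is the main obstacle. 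Because the constraint is scalar, the linearized constraint $g(\boldsymbol{\theta}_k)+\nabla g(\boldsymbol{\theta}_k)^\top d\ge 0$ is always feasible, since $\nabla g\ne0$. For the multiplier bound, I would start from the QP stationarity condition $B_kd_k+2(\boldsymbol{\theta}_k-\boldsymbol{\theta}_0)=\lambda_{k+1}\nabla g(\boldsymbol{\theta}_k)$. This gives
\[
\lambda_{k+1}\le \frac{\|B_k\|\|d_k\|+2\operatorname{diam}(\Omega)}{1-\delta}.
\]
Boundedness of $d_k$ follows from uniform positive definiteness of $B_k$ together with the boundedness of $\Omega$ and of $g$ on it. So the multipliers are uniformly bounded, the penalty parameter settles after finitely many updates, and the classical theorem applies. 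It yields that every accumulation point is a KKT point, and LICQ from (i) makes its multiplier unique.

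For (iii), we argue by reduction rather than by a new analysis. Once the penalty parameter is fixed and the multipliers are bounded by $\Lambda:=C/(1-\delta)$, we compare with a first-order method applied to the composite problem of minimizing $\phi_\mu$, for example a prox-linear or exact-penalty scheme. That problem is a smooth function composed with the Lipschitz convex map $t\mapsto\mu|\min(0,t)|$, and the known worst-case bound for reaching an $\varepsilon$-stationary point of it is $\mathcal{O}(L\,\Delta\,\varepsilon^{-2})$. This matches the unconstrained smooth bound of Cartis--Gould--Toint, with constants depending on $\delta$ only through $\Lambda$ and the Lipschitz constants. We then translate $\varepsilon$-stationarity of $\phi_\mu$ into the four $\varepsilon$-KKT conditions in the statement. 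Because the active gradient has norm at least $1-\delta$, the complementarity and feasibility residuals are controlled by the stationarity measure up to a factor of order $1/(1-\delta)$. Rescaling $\varepsilon$ by this constant keeps the $\mathcal{O}(\varepsilon^{-2})$ order. This translation step and the choice of $\mu$ are the delicate parts of (iii); since the claim is only ``no harder than'', I would state it as inheriting that rate with explicit $\delta$-dependent constants.
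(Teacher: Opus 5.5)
Your proposal is correct and follows the paper's overall skeleton. Part (i) is identical. In (ii) you, like the paper, use the step $t\nabla g$ to show the linearized constraint is always satisfiable. The genuine difference is in how the $\ell_1$ penalty is controlled.

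The paper fixes $\mu>\tfrac{2}{1-\delta}\operatorname{diam}(\Omega)$ a priori, which dominates the true KKT multiplier. It then shows $\|\nabla\phi\|\ge\mu(1-\delta)-2\operatorname{diam}(\Omega)>0$ on $\{g<0\}$, so the merit function has no infeasible stationary points. It gets bounded iterates from coercivity and cites the standard SQP theorem without stating its conditions on $B_k$.

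You instead bound the QP multipliers directly, using the subproblem stationarity $B_kd_k+2(\boldsymbol{\theta}_k-\boldsymbol{\theta}_0)=\lambda_{k+1}\nabla g(\boldsymbol{\theta}_k)$ with uniformly positive definite, bounded $B_k$. This is what Han's theorem actually needs: descent of $\phi_\mu$ along $d_k$ requires $\mu>\lambda_{k+1}$. Since $\lambda_{k+1}$ depends on $B_k$ and $d_k$, it can exceed $2\operatorname{diam}(\Omega)/(1-\delta)$, the bound the paper's $\mu$ was chosen against. The price of your route is an adaptive rather than explicit $\mu$, plus a standard extra hypothesis on the Hessian approximations. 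Note that your bound on $\|d_k\|$ quietly uses the Eikonal lower bound a second time: the QP-feasible point $|g|\nabla g/\|\nabla g\|^2$ has norm at most $|g|/(1-\delta)$.

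For (iii), the paper argues only qualitatively that $\sigma_{\min}\ge 1-\delta$ keeps $\mu=\mathcal{O}(1/(1-\delta))$. You reduce to the $\mathcal{O}(\varepsilon^{-2})$ evaluation bound for first-order composite minimization of $\phi_\mu$, then convert $\varepsilon$-criticality into the four KKT residuals; this gives the claim real content. The conversion does go through. At a point with $g=-\eta<0$, take a step of length $\min(1,\eta/\|\nabla g\|)$ along $\nabla g$. This shows the criticality measure is at least $\min(1,\eta/\|\nabla g\|)\,(\mu(1-\delta)-2\operatorname{diam}(\Omega))$. Hence $\eta=\mathcal{O}(\varepsilon)$ once $\mu(1-\delta)>2\operatorname{diam}(\Omega)$, which is exactly the paper's exactness inequality reappearing.
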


\begin{theorem}[Local Convergence]\label{thm:local}
Consider problem~\eqref{eq:smpl_collisionresolution_smooth_main} under Assumptions~\ref{asmp:smoothness},~\ref{asmp:feasibility}, and~\ref{asmp:approx_eikonal}. Let $\boldsymbol{\theta}^\star$ be a local minimizer, and assume the initial pose is infeasible, i.e., $g(\boldsymbol{\theta}_0) < 0$. Then:
\begin{enumerate}
\item[\textup{(i)}] \textbf{LICQ and Strict Complementarity:} LICQ holds at $\boldsymbol{\theta}^\star$, and the unique KKT multiplier satisfies $0 < \frac{2}{1+\delta}\|\boldsymbol{\theta}^\star - \boldsymbol{\theta}_0\| \leq \lambda^\star \leq \frac{2}{1-\delta}\|\boldsymbol{\theta}^\star - \boldsymbol{\theta}_0\|$.
\item[\textup{(ii)}] \textbf{SOSC and Fast Convergence:} Define $\kappa \triangleq \lambda^\star\|\nabla_{\boldsymbol{\theta}}^2 g(\boldsymbol{\theta}^\star)\|_2$. If $\kappa < 2$, then the full Lagrangian Hessian is positive definite (implying Second-Order Sufficient Conditions), and SQP with exact Hessian converges locally to $(\boldsymbol{\theta}^\star, \lambda^\star)$ at a quadratic rate. A quasi-Newton (BFGS) variant satisfying the Dennis--Mor\'e condition converges superlinearly.
\end{enumerate}
\end{theorem}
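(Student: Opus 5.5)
We will prove Theorem~\ref{thm:local} directly from the KKT conditions of problem~\eqref{eq:smpl_collisionresolution_smooth_main} with $f(\boldsymbol{\theta})=\|\boldsymbol{\theta}-\boldsymbol{\theta}_0\|^2$ and $C_l=0$. Once these conditions are available, we invoke the classical local SQP theory of~\cite{nocedal2006numerical}.

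For part (i), LICQ at $\boldsymbol{\theta}^\star$ is immediate. There is a single constraint, so LICQ reduces to $\nabla g(\boldsymbol{\theta}^\star)\neq 0$, and Assumption~\ref{asmp:approx_eikonal} gives $\|\nabla g(\boldsymbol{\theta}^\star)\|\ge 1-\delta>0$. Since LICQ holds at the local minimizer, the KKT theorem yields a multiplier $\lambda^\star\ge 0$ satisfying two conditions. The first is stationarity, $2(\boldsymbol{\theta}^\star-\boldsymbol{\theta}_0)=\lambda^\star\nabla g(\boldsymbol{\theta}^\star)$. The second is complementarity, $\lambda^\star g(\boldsymbol{\theta}^\star)=0$. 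Uniqueness of $\lambda^\star$ follows from LICQ.

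To get strict positivity, we use infeasibility of the initial pose. Since $g(\boldsymbol{\theta}_0)<0$ and $g(\boldsymbol{\theta}^\star)\ge 0$, we have $\boldsymbol{\theta}^\star\neq\boldsymbol{\theta}_0$, so the left-hand side of the stationarity condition is nonzero. Hence $\lambda^\star>0$, and complementarity then forces the constraint to be active, $g(\boldsymbol{\theta}^\star)=0$. This is strict complementarity. Taking norms in the stationarity condition gives $\lambda^\star=2\|\boldsymbol{\theta}^\star-\boldsymbol{\theta}_0\|/\|\nabla g(\boldsymbol{\theta}^\star)\|$. Substituting the Eikonal bounds $1-\delta\le\|\nabla g\|\le 1+\delta$ gives exactly the stated two-sided bound on $\lambda^\star$.

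For part (ii), the Hessian of the Lagrangian $L=f-\lambda g$ is $\nabla^2_{\boldsymbol{\theta}\boldsymbol{\theta}}L=2I-\lambda^\star\nabla^2 g(\boldsymbol{\theta}^\star)$. For any unit vector $v$, this gives $v^\top\nabla^2 L\,v\ge 2-\lambda^\star\|\nabla^2 g(\boldsymbol{\theta}^\star)\|_2=2-\kappa>0$. The Hessian is therefore positive definite on the whole space, and in particular on the critical cone. Together with LICQ and strict complementarity, this gives SOSC, so $\boldsymbol{\theta}^\star$ is a strict, nondegenerate local solution.

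Next we verify the regularity needed for Newton-type convergence. Assumption~\ref{asmp:smoothness} gives Lipschitz continuity of $\nabla^2 g$, and $\nabla^2 f$ is constant, so $\nabla^2 L$ is Lipschitz near $(\boldsymbol{\theta}^\star,\lambda^\star)$. Because the constraint is active with $\lambda^\star>0$, the active set is identified correctly near the solution. SQP there coincides with Newton's method on the KKT system
\[
\nabla f-\lambda\nabla g=0,\qquad g=0.
\]
The Jacobian of this system is $\begin{pmatrix}\nabla^2L & -\nabla g\\ \nabla g^\top & 0\end{pmatrix}$, which is nonsingular because LICQ and SOSC both hold. The standard Newton theorem (Nocedal--Wright, Thm.~18.4) then yields local quadratic convergence.

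For the quasi-Newton variant, we cite the Dennis--Moré characterization for SQP (Nocedal--Wright, Thm.~18.5). It states that the iteration converges superlinearly whenever the projected Hessian approximation error satisfies the Dennis--Moré condition, given the same nonsingularity and Lipschitz hypotheses.

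The main obstacle is not the algebra but making the citation to SQP theory legitimate. We must confirm that the KKT point is isolated and that the active set is eventually identified correctly. Strict complementarity together with positive definiteness (requiring $\kappa<2$) handles both; beyond that we rely on the textbook results.
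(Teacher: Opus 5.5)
Your proposal is correct and follows essentially the same route as the paper. You obtain LICQ from the Eikonal lower bound and the two-sided multiplier bound by taking norms in $2(\boldsymbol{\theta}^\star-\boldsymbol{\theta}_0)=\lambda^\star\nabla g(\boldsymbol{\theta}^\star)$; you get positive definiteness of $2I-\lambda^\star\nabla^2 g(\boldsymbol{\theta}^\star)$ from $\kappa<2$; and you finish with Newton on the KKT system plus Dennis--Mor\'e. The only cosmetic difference is that you derive activity $g(\boldsymbol{\theta}^\star)=0$ from $\lambda^\star>0$ and complementarity, whereas the paper argues directly that an inactive local minimizer would have to coincide with the infeasible unconstrained minimizer $\boldsymbol{\theta}_0$.
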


\begin{proof}
The complete proofs are provided in the supplementary material.
\end{proof}

\subsection{Eikonal Regularization}
\label{sec:eikonal}

Among the three assumptions in~\cref{sec:theory}, smoothness is automatically satisfied by an MLP with smooth activations (e.g., Softplus), and feasibility consistency is encouraged by direct sign supervision (\cref{sec:training}). The approximate Eikonal property (Assumption~\ref{asmp:approx_eikonal}), however, requires a dedicated regularizer. The ideal pointwise target is
\begin{equation}
\label{eq:eikonal_pose}
\|\nabla_{\boldsymbol{\theta}} g(\boldsymbol{\theta})\| = 1,
\end{equation}
which we encourage in expectation over $\Omega$ (assuming a uniform probability measure for analysis) by minimizing the average violation:
\begin{equation}
\begin{aligned}
\label{eq:eikonal}
\mathcal{L}_\text{grad}^i &= \big|\|\nabla g(\boldsymbol{\theta}_i)\|-1\big|,\\
\mathcal{L}_{\mathrm{grad}}
&=
\mathbb{E}_{\boldsymbol{\theta}\sim\Omega}
\Big[
\big|\|\nabla g(\boldsymbol{\theta})\|-1\big|
\Big].
% \mathcal{L}_\text{grad} &= \frac{1}{N}\sum_{i=1}^N \big|\|\nabla g(\boldsymbol{\theta}_i)\|-1\big|,
\end{aligned}
\end{equation}

The following proposition shows that this loss provides a quantitative volume bound on the regions where Assumption~\ref{asmp:approx_eikonal} fails:

\begin{proposition}[Volume Bound on Approximate Eikonal Failure]\label{prop:approx_eikonal}
Let $S_\delta$ denote the region where the approximate Eikonal condition fails for a given margin $\delta \in (0,1)$:
$$S_\delta = \left\{ \boldsymbol{\theta} \in \Omega \;\middle|\; \big|\|\nabla_{\boldsymbol{\theta}} g(\boldsymbol{\theta})\| - 1\big| > \delta \right\}.$$
If the expected Eikonal loss satisfies $\mathcal{L}_{\mathrm{grad}} \le \varepsilon$, then the probability measure of this failure region is bounded by:
\begin{equation}\label{eq:failure_bound}
\mathbb{P}(\boldsymbol{\theta} \in S_\delta) \le \frac{\varepsilon}{\delta}.
\end{equation}
\end{proposition}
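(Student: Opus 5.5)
This is a direct application of Markov's inequality to the nonnegative random variable
\[
Z(\boldsymbol{\theta}) \triangleq \big|\|\nabla_{\boldsymbol{\theta}} g(\boldsymbol{\theta})\|-1\big|,
\]
where $\boldsymbol{\theta}$ is drawn from the uniform probability measure on $\Omega$ used to define $\mathcal{L}_{\mathrm{grad}}$ in~\eqref{eq:eikonal}.

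First, we check that $Z$ is a well-defined, measurable, integrable random variable. By Assumption~\ref{asmp:smoothness} (or merely continuity of $\nabla g$, which an MLP with smooth activations provides), $\boldsymbol{\theta}\mapsto \|\nabla g(\boldsymbol{\theta})\|$ is continuous on $\Omega$. Hence $Z$ is continuous and Borel measurable. Because $\Omega$ is bounded and $\nabla g$ is continuous on its closure (or Lipschitz), $Z$ is bounded, so $\mathbb{E}[Z]=\mathcal{L}_{\mathrm{grad}}$ is finite. Consequently $S_\delta=\{Z>\delta\}$ is a measurable set, and $\mathbb{P}(\boldsymbol{\theta}\in S_\delta)$ makes sense.

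Second, we use the pointwise inequality $\delta\,\mathbb{1}_{S_\delta}(\boldsymbol{\theta}) \le Z(\boldsymbol{\theta})$ for every $\boldsymbol{\theta}\in\Omega$. On $S_\delta$ it holds because $Z>\delta$ there; off $S_\delta$ the left side is $0$ and $Z\ge 0$. Taking expectations gives
\[
\delta\,\mathbb{P}(S_\delta)\le \mathbb{E}[Z]=\mathcal{L}_{\mathrm{grad}}\le\varepsilon .
\]
Dividing by $\delta>0$ yields~\eqref{eq:failure_bound}.

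No step is a real obstacle; the only care needed is the measurability and finiteness point above. I would also add a remark that with the uniform measure this reads $\mathrm{vol}(S_\delta)\le \mathrm{vol}(\Omega)\,\varepsilon/\delta$, which justifies the name "volume bound." On the complement $\Omega\setminus S_\delta$, Assumption~\ref{asmp:approx_eikonal} holds with margin $\delta$, so $\|\nabla g\|\ge 1-\delta>0$ and LICQ holds outside a set of measure at most $\varepsilon/\delta$.
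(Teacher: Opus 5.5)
Your proof is correct and is essentially the paper's argument: both apply Markov's inequality to $X(\boldsymbol{\theta})=\big|\|\nabla g(\boldsymbol{\theta})\|-1\big|$ under the uniform measure on $\Omega$. The paper goes through $\mathbb{P}(X>\delta)\le\mathbb{P}(X\ge\delta)\le\mathbb{E}[X]/\delta$, whereas you obtain the strict-inequality version directly from $\delta\,\mathbf{1}_{S_\delta}\le X$; your measurability and finiteness check is a harmless extra that the paper omits.
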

\begin{proof}
The complete proofs are provided in the supplementary material.
\end{proof}

Combined with the fact that the approximate Eikonal property implies LICQ ($\|\nabla g\| \geq 1 - \delta > 0$), this proposition provides a quantitative link between training accuracy and the volume of pose-space regions where LICQ is guaranteed to hold, directly supporting the convergence guarantees in~\cref{sec:theory}.

\subsection{Training Objective}
\label{sec:training}

To enforce Assumption~\ref{asmp:feasibility}, we impose boundary supervision using the exact collision indicator
$\iota$.
Concretely, we encourage:
\[
g(\boldsymbol{\theta}) \, \iota(\mathcal{M}(\boldsymbol{\beta},\boldsymbol{\theta})) > 0,
\]
so that $g(\boldsymbol{\theta})>0$ for collision-free poses and $g(\boldsymbol{\theta})<0$ otherwise.
Together, Eikonal regularization and sign supervision encourage $g(\boldsymbol{\theta})$ to approximate a SDF-like function to the collision boundary in SMPL pose space. Using Monte Carlo sampling, we construct:
$\mathcal{D}_{\theta}=\{\langle \boldsymbol{\theta}_i, \iota_i\rangle\}_{i=1}^{N}$,
where $\iota_i=\iota(\mathcal{M}(\boldsymbol{\beta},\boldsymbol{\theta}_i))$,
and optimize the empirical PINNs-style~\cite{raissi2019physics} objective:
\begin{equation}
\begin{aligned}
\label{eq:vanilla-PINNs-theta}
\mathcal{L}_\text{sign}^i &= -\min\big(g(\boldsymbol{\theta}_i)\iota_i, 0\big),\\
\mathcal{L}_\text{Eikonal} &= \frac{1}{|D_\theta|}\sum_{i=1}^{|D_\theta|}
\left(\mathcal{L}_\text{grad}^i + \mathcal{L}_\text{sign}^i\right).
\end{aligned}
\end{equation}
This objective is a direct high-dimensional analogue of standard Eikonal training used in low-dimensional SDF learning~\cite{park2019deepsdf,ni2021robust}.

\subsection{Temporal-Difference Variant}
\label{sec:td}
Inspired by the connection between $\mathcal{L}_\text{grad}$ and the Temporal Difference (TD) loss in offline reinforcement learning, and following Ni et al.~\cite{ni2025physicsinformed} who show that a finite-timestep TD variant improves training, we replace $\mathcal{L}_\text{grad}$ with a symmetric TD loss parameterized by a small timestep $\Delta t$:
\begin{align} 
\label{eq:finite-diff}
\mathcal{L}_\text{TD}^i=\big|g(\boldsymbol{\theta}_i+\boldsymbol{v}_i\Delta t)-g(\boldsymbol{\theta}_i-\boldsymbol{v}_i\Delta t)-2\Delta t\big|, 
\end{align} 
where $\boldsymbol{v}_i=\nabla_{\boldsymbol{\theta}} g(\boldsymbol{\theta}_i)/\|\nabla_{\boldsymbol{\theta}} g(\boldsymbol{\theta}_i)\|$ is the normalized velocity function. We find that replacing $\mathcal{L}_\text{grad}$ with $\mathcal{L}_\text{TD}$ improves the performance as demonstrated in~\cref{tab:main_result}.

\subsection{Collision Resolution for Motion Sequence}
\label{sec:motion}

Our formulation naturally extends from static pose optimization to human motion sequences. Specifically, the sampling process in human motion diffusion models~\cite{tevethuman,meng2025rethinking,zhang2023remodiffuse,zhong2024smoodi,xu2023interdiff,dai2024motionlcm,ruiz2025mixermdm,hong2025salad,li2023object,li2025simmotionedit} and human motion flow matching models~\cite{hymotion2025} can be formulated as a differentiable generative mapping: $\mathbf{m} = f(\mathbf{x})$, where $\mathbf{x}$ denotes the input noise, and $\mathbf{m}$ is the generated human motion sequence. To enforce task-specific constraints (e.g., obstacle or self-collision avoidance), DNO~\cite{karunratanakul2024optimizing} proposes a conditional motion synthesis formulated as:
\begin{equation}
\mathbf{x}^\star = \arg\min_{\mathbf{x}} \mathcal{Q }\big(f(\mathbf{x})\big),
\end{equation}
where $\mathcal{Q}$ is a user-defined criterion measuring constraint satisfaction or motion plausibility. This formulation is naturally compatible with our constrained optimization formulation in \prettyref{eq:smpl_collisionresolution_smooth_main}. Specifically, we first train \ours as usual for static human poses. Suppose we are given a motion sequence with self-collisions defined as $\mathbf{m}_s = [\boldsymbol{\theta}_s^0, \boldsymbol{\theta}_s^1, \cdots, \boldsymbol{\theta}_s^T]$, and let the optimization variable be $\mathbf{m} = [\boldsymbol{\theta}^0, \boldsymbol{\theta}^1, \cdots, \boldsymbol{\theta}^T]$, we define the objective function $\mathcal{Q}$ as follows:
\begin{equation}
\mathcal{Q}(\mathbf{m}) = \sum_{i=0}^{T} \max\big(C_l-g(\boldsymbol{\theta}^i), 0 \big) + \lambda_m \, d_{motion}(\mathbf{m}, \mathbf{m}_s),
\end{equation}
where the first term penalizes violations of the collision constraint, and the second term enforces proximity to the original motion. The coefficient $\lambda_m$ balances collision resolution and motion preservation.  The distance metric $d_{motion}$ is detailed in the supplementary material. Note that we avoid the hard constraints from~\prettyref{eq:smpl_collisionresolution_smooth_main}, which are disallowed by DNO, and replace them with soft constraint functions. 
% $\max\big(C_l-g(\boldsymbol{\theta}_s^i), 0 \big)$.
\section{Evaluation}
In this section, we evaluate the performance of \ours and compare it against baselines. We first discuss collision resolution for static human poses (\prettyref{sec:human_pose}), and then ablate key design choices and demonstrate properties of our method (\prettyref{sec:ablation}). Next, we scale the constraint function learned from individual poses to human motion sequences (\prettyref{sec:motion_seq}). 

\subsection{\label{sec:human_pose}Application: Static Human Pose}
\paragraph{Humans with Collisions (HwC) Dataset.}
Despite the availability of various human pose datasets~\cite{mahmood2019amass,delmas2024posescript,ionescu2013human3}, meshes sampled solely from the collision-free ground-truth distribution do not expose \ours to any self-colliding examples. To address this, we introduce the \textbf{Humans with Collisions (HwC)} dataset of nearly $931k$ SMPL poses, which is detailed in the supplementary. A subset of $500$ representative self-penetrating meshes serves for benchmarking. This dataset also serves as the sampling pool to approximate $\Omega$ in \cref{sec:eikonal}. In addition, we follow previous work~\cite{mihajlovic2022coap} and evaluate the methods on a subset of PROX dataset~\cite{PROX:2019}.

\paragraph{Metrics.} Following~\cite{tan2022n}, we adopt the following metrics: 
\begin{itemize}
\item Success Rate (SCC): The rate of the collision resolution method producing penetration-free meshes.
\item Penetration Depth Reduction (PDR): The ratio of reduced penetration depth (PD)~\cite{pan2012fcl} relative to the original penetration depth defined as:
\begin{align*}
\text{PDR} = \max\left[1 - 
\frac{\text{PD after optimization}}{\text{PD before optimization}}, 0\right].
\end{align*}
\item Mean Vertex Distance (MVD): The average $L_2$ distance of the vertices between the original and the optimized mesh. (An ideal collision handler resolves collisions while keeping the resulting pose as close as possible to the original sample.)
\end{itemize}

\paragraph{Baselines.}
We consider the following baselines: 1) Torch-mesh-isect~\cite{tzionas2016capturing}: An open-source tool developed specifically for SMPL-based collision resolution. 2) Classifier-baseline: Inspired by N-Penetrate~\cite{tan2022n}, we replace the collision constraint function with the probability of being a non-colliding mesh predicted by a classifier using the cross-entropy loss. More details are deferred to the supplementary. 3) COAP~\cite{mihajlovic2022coap}: a volumetric occupancy-field approach that treats collision resolution as a sampling-based occupancy penalty in the 3D workspace rather than a direct pose-space constraint. 4) VolumetricSMPL~\cite{mihajlovic2025volumetricsmpl}: a follow-up volumetric body representation that extends COAP by modeling the human body as a signed distance field (SDF) instead of an occupancy field.

\paragraph{Implementation Details.}
For optimization, we use the standard SLSQP method implemented in SciPy~\cite{2020SciPy-NMeth}. The network is a 12-layer MLP with a hidden dimension of 512. We train it for 200 epochs, where we adopt active learning~\cite{tan2022n} to collect boundary samples every 40 epochs. The entire training process requires around 17 hours on a single GPU. For the loss term, the default $\Delta t$ in~\prettyref{eq:finite-diff} is 0.01. Inference takes $7.26$ seconds per pose on average.

\begin{table}[t]
\centering
\caption{Quantitative results and ablation study on the pose datasets. Our method achieves significant improvement in collision resolution ability over all baselines. $\uparrow$: higher values are better; $\downarrow$: lower values are better. 
Bold and underlined values indicate the \textbf{best} and \underline{second-best} performers in each category, respectively.
$\mathcal{L}_{\mathrm{grad}}$ and $\mathcal{L}_{\mathrm{TD}}$ represent the gradient loss terms; 
WD denotes the weighted distance metric in pose space (see \cref{sec:ablation}).}
\label{tab:main_result}
\scalebox{1}{
\begin{tabular}{l lll | ccc | ccc}
\toprule
\multirow{2}{*}{Method} & \multirow{2}{*}{$\mathcal{L}_{\mathrm{grad}}$} & \multirow{2}{*}{$\mathcal{L}_{\mathrm{TD}}$} & \multirow{2}{*}{WD} & \multicolumn{3}{c|}{HwC} & \multicolumn{3}{c}{PROX~\cite{PROX:2019}} \\
\cmidrule(lr){5-7} \cmidrule(lr){8-10}
 & & & & SCC$\uparrow$ & PDR$\uparrow$ & MVD$\downarrow$ & SCC$\uparrow$ & PDR$\uparrow$ & MVD$\downarrow$ \\ 
\midrule
%\textit{Baselines} & & & & & & & & & \\
Torch-mesh-isect~\cite{tzionas2016capturing} &  &  &  & 0.100 & 0.357 & 0.041 & 0.110 & 0.291 &  \underline{0.012} \\
Classifier baseline &  &  &  & 0.056 & 0.081 & \textbf{0.002} & 0.170 & 0.204 & \textbf{0.006} \\
COAP~\cite{mihajlovic2022coap} &  &  &  & 0.446 & 0.832 & 0.106 & 0.560& 0.775 & 0.016 \\
VolumetricSMPL~\cite{mihajlovic2025volumetricsmpl} &  &  &  &0.250 & 0.541 & 0.068 & 0.333&0.699 & {0.013} \\
\midrule
\textbf{Ours} & & \checkmark & \checkmark & \underline{0.958} & \underline{0.982} & \underline{0.059} & \underline{0.800} & \underline{0.893} & 0.021 \\

\midrule
%\textit{Ablations} & & & & & & & & & \\
Ours (w/o WD) & & \checkmark & & \textbf{0.960} & \textbf{0.987} & 0.067 & \textbf{0.850}& \textbf{0.918} & 0.036\\
Ours ($\mathcal{L}_{\mathrm{grad}}$) & \checkmark & & \checkmark & 0.862 & 0.917 & 0.062 & 0.520 & 0.615 & 0.019 \\
Ours ($\mathcal{L}_{\mathrm{grad}}$+$\mathcal{L}_{\mathrm{TD}}$) & \checkmark & \checkmark & \checkmark & 0.870 & 0.922 & 0.062 & 0.670 & 0.743 & 0.022 \\
Ours (w/o grad term) & & & \checkmark & 0.068  & 0.081 & 0.458 & 0.050 & 0.106 & 0.431 \\
\bottomrule
\end{tabular}}

\scalebox{1}{
\begin{tabular}{l | ccc}
\toprule
Method & SCC$\uparrow$ & PDR$\uparrow$ & MVD$\downarrow$ \\
\midrule

COAP~\cite{mihajlovic2022coap}
& 0.446 & 0.832 & 0.106 \\

VolumetricSMPL~\cite{mihajlovic2025volumetricsmpl}
& 0.250 & 0.541 & 0.068 \\
\midrule

\textbf{Ours}
& \underline{0.958} & \underline{0.982} & \underline{0.059} \\

\bottomrule
\end{tabular}}

\end{table}

\paragraph{Quantitative Results.}
As shown in \cref{tab:main_result}, our method substantially outperforms all baselines in both success rate and penetration depth reduction. The classifier baseline, which also performs latent-space optimization, achieves a very low success rate, highlighting that our proposed neural field is a crucial component for effective collision handling within such frameworks. Similarly, Torch-mesh-isect~\cite{tzionas2016capturing} applies triangle-level penetration losses in pose optimization, but its local face-level loss formulation~\cite{SMPL-X} prevents it from resolving deep self-penetrations, often causing it to converge to local minima and fail to move the mesh sufficiently. Among the baselines, COAP~\cite{mihajlovic2022coap} attains the highest SCC and PDR, but its performance degrades in challenging cases with severe penetrations. Most notably, our method increases the success rate on the HwC dataset from $0.446$ to $0.958$ while achieving a significantly lower MVD, which is also reflected in \cref{fig:analysis_combined}. This indicates that our formulation resolves self-collisions 
with smaller pose deviations,
empirically approximating minimal-distance corrections to collision-free configurations.

\begin{figure}[t] % or [htbp]
\centering
\includegraphics[width=\columnwidth]{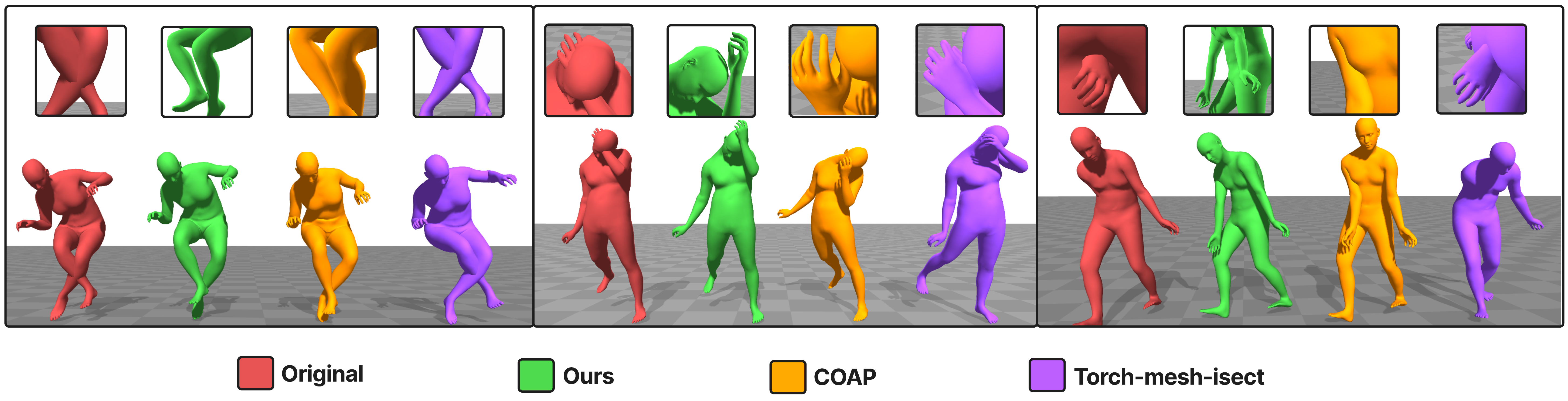}
\caption{Qualitative comparison with baseline methods on three cases (left to right). Within each case, results are shown from left to right: Original input, Ours, COAP~\cite{mihajlovic2022coap}, and Torch-mesh-isect~\cite{tzionas2016capturing}.
Our method consistently removes self-collisions.
Insets highlight representative local self-collision regions.
Torch-mesh-isect fails to resolve the collisions in all three cases,
while COAP almost resolves the first case but still leaves minor residual intersections.}
\vspace{-4mm}
\label{fig:pose}
\end{figure}

\paragraph{Qualitative Results.}
\cref{fig:pose} presents qualitative comparisons with baseline collision-handling methods. Across the three examples, our method consistently removes the highlighted self-intersections, producing visually collision-free configurations in the inset regions. In contrast, Torch-mesh-isect fails to resolve the collisions in all three cases, which is consistent with its low success rate reported in \cref{tab:main_result}. COAP is able to reduce penetrations in some cases and can nearly eliminate the collision in relatively simple scenarios (e.g., the first case), although small residual intersections often remain. This behavior is also reflected in \cref{tab:main_result}, where COAP achieves relatively high PDR but still falls short of fully resolving collisions in some cases. In more complex cases involving multiple body contacts (e.g., the second and third cases), COAP is unable to remove the intersections. Overall, our method achieves more reliable collision resolution while preserving the overall pose structure of the input.

\subsection{Ablation Study}
\label{sec:ablation}
\paragraph{Choices of $d_{\mathrm{SMPL}}(\boldsymbol{\theta},\boldsymbol{\theta}')$.}
We ablate the choice of the pose distance metric. We compare the standard $L_2$ distance:
\begin{equation}
    d_{\mathrm{std}}(\boldsymbol{\theta}, \boldsymbol{\theta}') = 
    \frac{1}{J}\sum_{j=1}^{J} || \boldsymbol{\theta}_j - \boldsymbol{\theta}'_j ||_2,
\end{equation}
against a weighted $L_2$ distance:
\begin{equation}
d_{\mathrm{WD}}(\boldsymbol{\theta}, \boldsymbol{\theta}') = \frac{1}{\sum_{j=1}^{J} w_j}\sum_{j=1}^{J} w_j || \boldsymbol{\theta}_j - \boldsymbol{\theta}'_j ||_2,
\end{equation}
where $\boldsymbol{\theta}_j \in \mathbb{R}^6$ denotes the 6D parameters of the $j$-th joint, and $w_j$ represents the weight assigned based on the size of the subtree rooted at joint $j$ within the kinematic hierarchy. We refer to the former as ``Ours w/o WD'' (Weighted Distance). As shown in \cref{tab:main_result}, while ``Ours w/o WD'' achieves slightly higher collision resolution rates, it suffers from a higher Mean Vertex Distance (MVD). In contrast, our full model with WD significantly reduces MVD from $0.067$ to $0.059$. This reduction is achieved by penalizing rotations of proximal joints more heavily, as their perturbations propagate through the kinematic hierarchy and cause large-scale displacements of downstream subtrees.

\begin{figure*}[t]
\centering
\includegraphics[width=0.7\linewidth]{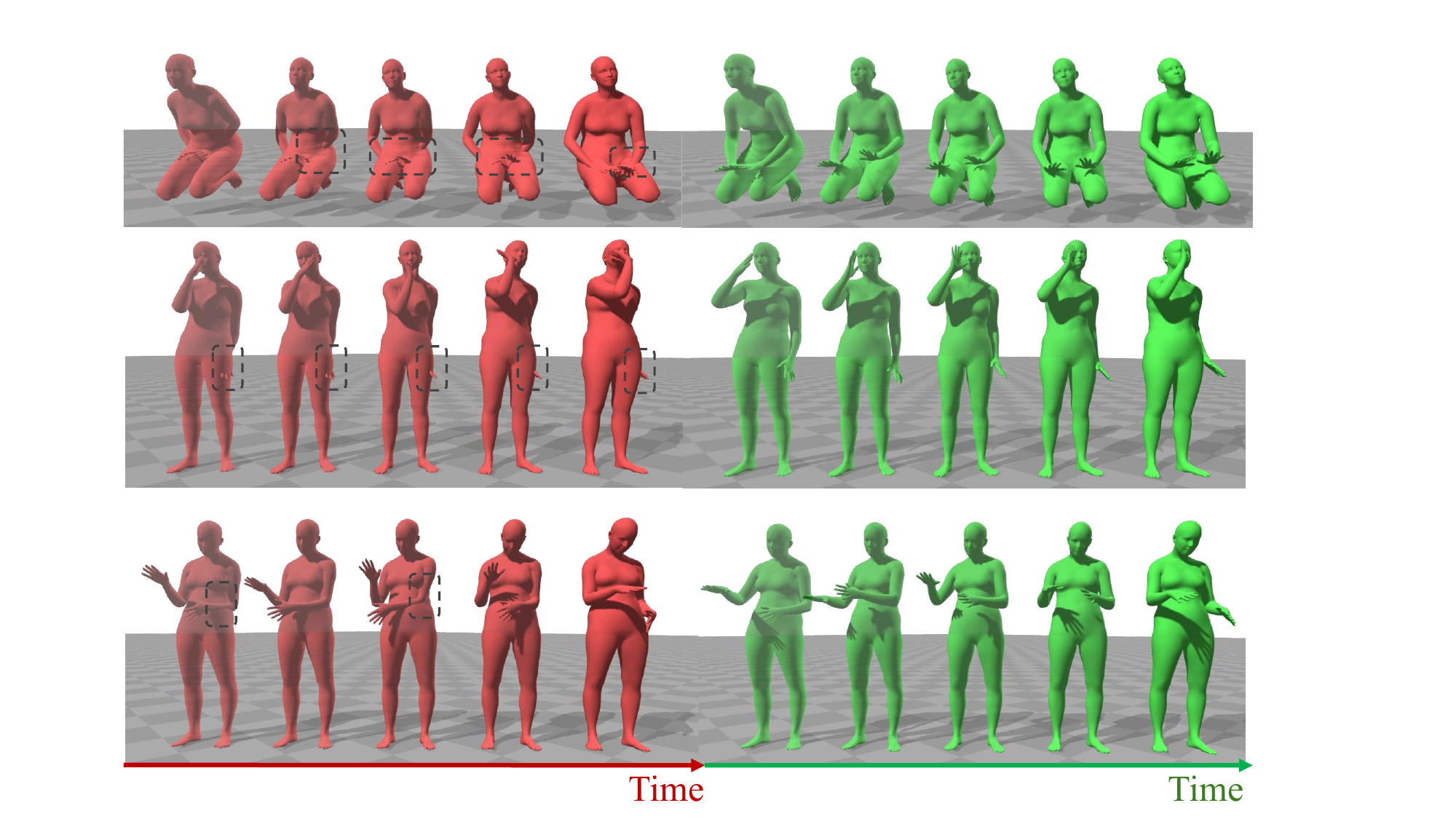}
\caption{Our method resolves the collision in the original samples while preserving the overall motion structure. The \textcolor{myred}{red} ones are original samples, and the \textcolor{mygreen}{green} ones are optimized ones.}
\vspace{-7mm}
\label{fig:motion}
\end{figure*}

\paragraph{$\mathcal{L}_{TD}$ is sufficient.} As shown in \cref{tab:main_result}, using $\mathcal{L}_{TD}$ alone yields the best overall performance across all metrics. We conjecture that incorporating $\mathcal{L}_{grad}$ introduces second-order derivatives during optimization, which increases training instability and hinders convergence. In contrast, $\mathcal{L}_{TD}$ effectively enforces the local geometric consistency. The variant that omits both loss terms fails to approximate a solution to the Eikonal equation, leading to poor performance. 

\begin{figure}[t]
  \centering
  \begin{subfigure}{0.48\textwidth}
    \centering
    \includegraphics[width=\linewidth]{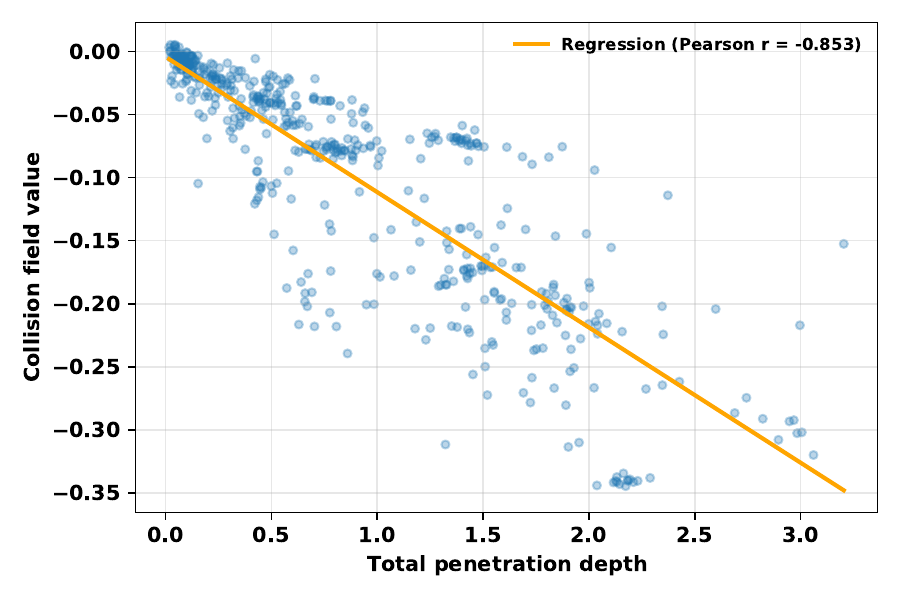}
    \caption{Correlation between $g$ values and PD.}
    \label{fig:cor}
  \end{subfigure}
  \hfill 
  \begin{subfigure}{0.48\textwidth}
    \centering
    \includegraphics[width=\linewidth]{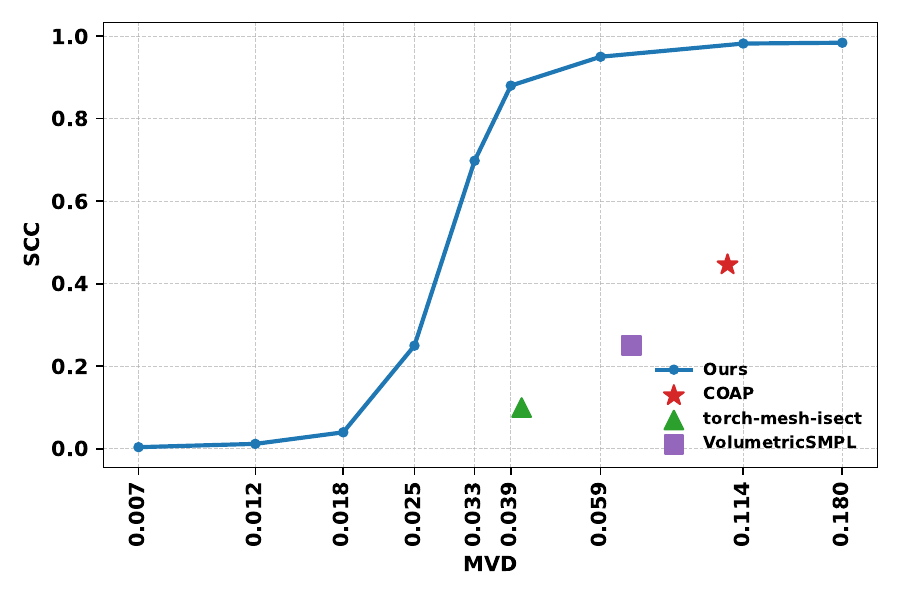}
    \caption{Trade-off between SCC and MVD.}
    \label{fig:Tradeoff}
  \end{subfigure}

  \caption{\textbf{Analysis of the neural collision constraint.} (a) Our neural field $g(\boldsymbol{\theta})$ shows a strong correlation with physical penetration depth (PD). We sample $500$ instances for visualization. (b) The threshold $C_l$ provides a controllable trade-off between collision resolution success (SCC) and geometric fidelity (MVD). The points correspond to $C_l \in \{-0.2, -0.1, -0.05, 0, 0.05, 0.1, 0.2, 0.4, 0.6\}$, ordered left to right. Performance from baseline methods is also included for reference.}
  \label{fig:analysis_combined}
  \vspace{-6mm}
\end{figure}
\paragraph{Tradeoff between SCC and MVD.} As shown in \cref{fig:Tradeoff}, increasing the constraint margin $C_l$ in
\prettyref{eq:smpl_collisionresolution_smooth_main} leads to higher SCC while also
increasing MVD. This reflects the inherent trade-off
between collision resolution and geometric fidelity, as also observed
in~\cite{tan2022n}. Importantly, our method provides a controllable mechanism
to navigate this trade-off simply by tuning $C_l$. As illustrated in
\cref{fig:Tradeoff}, our approach consistently achieves higher SCC at the
same or lower level of MVD compared with baseline methods, indicating a
more favorable trade-off between collision removal and motion preservation.

\paragraph{Correlation between $g$ and penetration depth.} Our method can learn the degree of collision, even though the dataset provides only binary collision labels. While this latent function is not directly visualizable, \cref{fig:cor} shows that its values are correlated with the total penetration depth on the training set.

\subsection{\label{sec:motion_seq}Application: Human Motion Sequence}
We obtain the trained $g$ from \cref{sec:human_pose} and show that $g$ can be robustly generalized to human motion sequences. We utilize a pre-trained motion model~\cite{hymotion2025} as the generative prior $f$ and select motion sequences with self-penetration from a motion dataset~\cite{athanasiou2024motionfix}. Examples are presented in~\cref{fig:motion}. Our method effectively resolves collisions while preserving the overall motion and avoiding noticeable artifacts. More quantitative results are provided in the supplementary material.
\section{Conclusion}
We presented \ours, a neural collision constraint defined directly in SMPL pose space for post-hoc self-collision resolution.
By establishing a connection between collision handling and the Eikonal equation, we provide theoretical grounding for neural constraint learning. In particular, we showed that Eikonal-regularized constraint functions satisfy the LICQ, ensuring the feasibility and numerical stability of constrained optimization in the pose space. The same learned constraint further serves as a generator-agnostic post-hoc corrector for human motion sequences, requiring no retraining of the underlying motion model. Experiments validate that \ours significantly improves collision resolution success rates compared to prior state-of-the-art approaches.

% ---- Bibliography ----
%
% BibTeX users should specify bibliography style 'splncs04'.
% References will then be sorted and formatted in the correct style.
%
\clearpage
\setcounter{page}{1}

\begin{center}
{\LARGE \bfseries PoseShield: Neural Collision Fields for \\[2pt]
Human Self-Collision Resolution \\[6pt]
\Large --- Supplementary Material ---}
\end{center}
\vspace{1.5em}

% Restart section numbering with letters (A, B, C, ...) for the appendix
\setcounter{section}{0}
\renewcommand{\thesection}{\Alph{section}}

\section{Humans with Collisions Dataset}
\label{sec:hwc}
\begin{figure*}[h] % or [htbp]
\centering
\includegraphics[width=\textwidth]{images/img_data_samples.pdf}
\caption{\textbf{Twenty randomly selected samples from the HwC Dataset.} 
    The samples are split into two example sets with ten poses each. 
    \textcolor{red}{Red} indicates poses with collisions, while \textcolor{green}{green} denotes collision-free poses.}
\label{fig:human_samples}
\end{figure*}

\paragraph{Human Pose Collision Labeling.}
In human pose representations, self-intersections frequently appear in regions such as the underarm or behind the knees. Although these are technically self-collisions, they primarily arise from the well-known artifacts of linear blend skinning (LBS) and do not affect the perceived physical plausibility of the motion. In contrast, collisions that truly degrade motion realism typically involve interactions across distinct body parts (e.g., hand–body, hand–leg, or left–right leg contacts).
Therefore, when detecting collisions, we exclude triangle–triangle pairs whose topological geodesic distance is less than $50$, as these are considered local artifacts rather than meaningful penetrations. This consideration has been detailed in SMPL-X~\cite{SMPL-X}. For the remaining pairs, a human pose is labeled as “colliding” if any self-collision is detected; otherwise, it is labeled as “non-colliding.”
\begin{figure*}[h] % or [htbp]
\centering
\includegraphics[width=\textwidth]{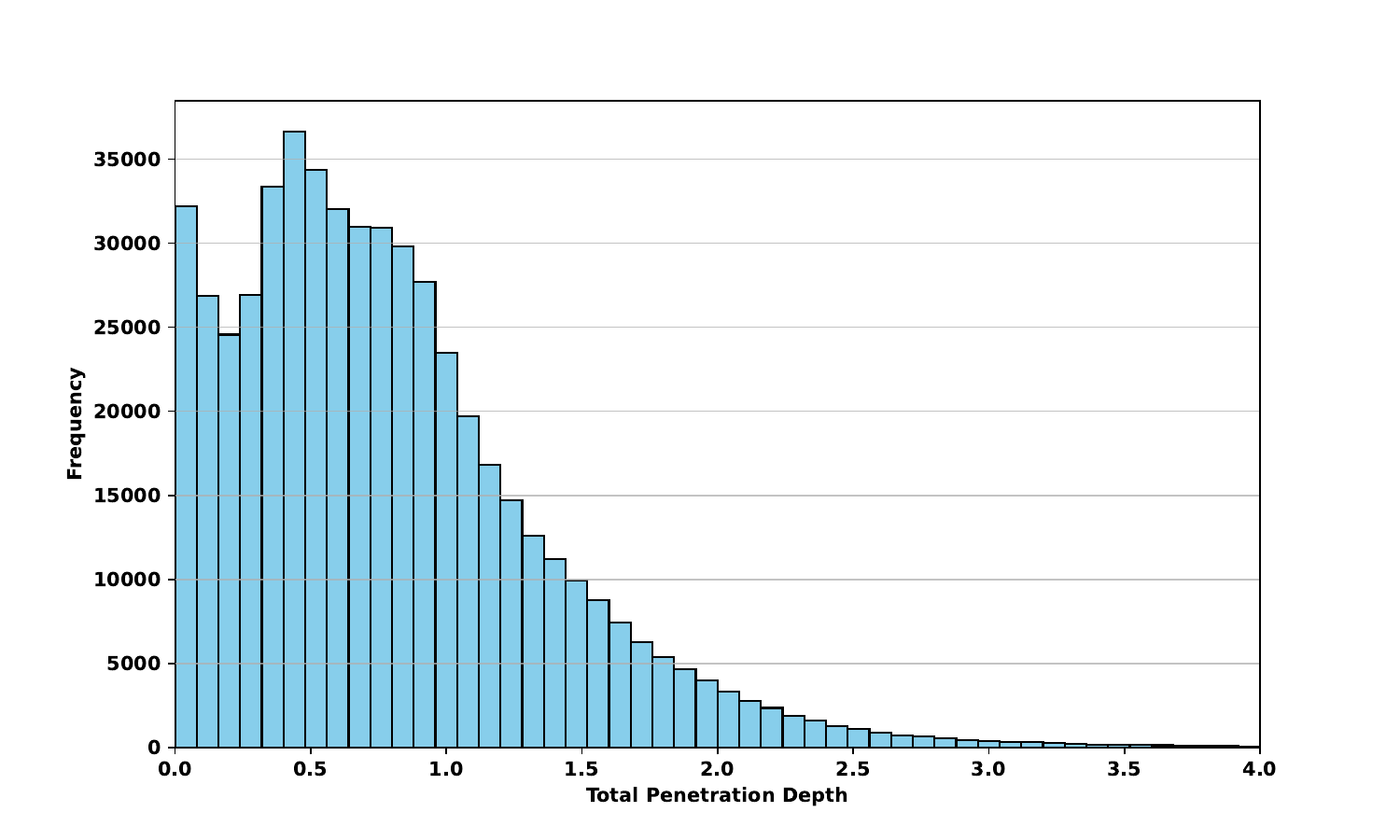}
\caption{Penetration depth distribution of the HwC dataset. Only self-colliding samples are included in the statistics. Non-colliding poses are excluded.}
\label{fig:penetration_depth_dist}
\end{figure*}

\paragraph{Dataset.}
To ensure that the synthesized colliding poses remain close to the natural distribution of valid human poses, we use MotionFix~\cite{athanasiou2024motionfix} as a seed set of meshes and augment it by adding Gaussian noise and applying Gram–Schmidt orthonormalization to generate self-intersecting samples.
Specifically, we adopt the SMPL~\cite{loper2023smpl} parametric space without global translation or global rotation, and convert all remaining joint rotations to the 6D representation.
In total, the resulting latent space has $21 \times 6 = 126$ dimensions. Using this strategy, we obtain a dataset of $931k$ poses.
Among them, $531k$ poses (57\%) exhibit self-collisions, while $399k$ poses (43\%) are collision-free. The dataset is split into training and test sets with a $9:1$ ratio. We further analyze the penetration depth statistics of the generated poses.
The distribution of penetration depth is shown in \cref{fig:penetration_depth_dist},
illustrating a wide range of collision severities in the dataset. However, even with only $10\%$ of the data assigned to the test split, evaluating collision resolution performance for all baselines remains computationally expensive.
Following the practice of previous work~\cite{mihajlovic2022coap}, we randomly sample a subset of $500$ self-penetrating meshes from the \textit{HwC test set} for benchmarking.
Examples of the HwC dataset are shown in \cref{fig:human_samples}, which contains a diverse set of plausible human poses.

\section{Theoretical Analysis}
\label{sec:theory_complete_supp}
In this section, we provide a self-contained theoretical analysis. We first present the rigorous constrained-optimization formulation of SMPL self-collision resolution (\cref{sec:problem_full_supp}). Next, we show that a pose-space signed distance function (SDF) to the collision boundary can be defined (\cref{sec:sdf_supp}). Under idealized assumptions on the neural collision field $g$ (\cref{sec:learned_approximation_supp}), we establish convergence guarantees for the resulting optimization (\cref{sec:convergence_supp}). The central practical contribution is to design a training objective for $g$ that encourages these assumptions to hold approximately; we provide theoretical justification for the Eikonal regularization (\cref{sec:eikonal_loss_property_supp}).

\subsection{Problem Formulation}
\label{sec:problem_full_supp}
An SMPL mesh~\cite{loper2023smpl,SMPL-X} is defined by shape parameters $\boldsymbol{\beta} \in \mathbb{R}^{d_\beta}$ and pose parameters $\boldsymbol{\theta} \in \mathbb{R}^{d_\theta}$. 
Given $(\boldsymbol{\beta}, \boldsymbol{\theta})$, the SMPL function produces a mesh:
\[
X = \mathcal{M}(\boldsymbol{\beta}, \boldsymbol{\theta}),
\]
where the mesh connectivity $\mathcal{T}$ is fixed and defined by the SMPL function itself. Since global translation and rotation don't affect self-collisions, we ignore them and let $ \boldsymbol{\theta}\in \mathbb{R}^{J\times 6}$ represent the $6$D rotations~\cite{zhou2019continuity} of the $J$ joints. We assume the shape parameter $\boldsymbol{\beta}$ is fixed and only optimize the pose parameter $\boldsymbol{\theta}$. 
\begin{definition}[$6$D pose domain]
For one joint, write the $6$D representation as
$\mathbf{r}=(\mathbf{a},\mathbf{b})\in\mathbb{R}^3\times\mathbb{R}^3$.  Its non-degenerate domain is
\begin{equation}
\label{eq:sixd_single_joint_domain_supp}
    \mathcal{D}
    :=
    \left\{(\mathbf{a},\mathbf{b})
    \;\middle|\;
    \|\mathbf{a}\|_2>0,\;
    \left\|\mathbf{b}-(\mathbf{u}^{\top}\mathbf{b})\mathbf{u}\right\|_2>0,
    \;\mathbf{u}=\frac{\mathbf{a}}{\|\mathbf{a}\|_2}
    \right\}.
\end{equation}
On $\mathcal{D}$, the Gram--Schmidt map $\pi_{6D}:\mathcal{D}\to SO(3)$ is
\begin{equation}
\label{eq:sixd_gs_map_supp}
    \pi_{6D}(\mathbf{r})
    =
    \big[\mathbf{u}\;\mathbf{v}\;\mathbf{u}\times\mathbf{v}\big],
    \qquad
    \mathbf{u}=\frac{\mathbf{a}}{\|\mathbf{a}\|_2},
    \quad
    \mathbf{v}=\frac{\mathbf{b}-(\mathbf{u}^{\top}\mathbf{b})\mathbf{u}}
    {\|\mathbf{b}-(\mathbf{u}^{\top}\mathbf{b})\mathbf{u}\|_2}.
\end{equation}
\end{definition}

In our setting, we restrict the $6$D coordinates to a bounded region
\begin{equation}
\label{eq:bounded_pose_box_supp}
    \Omega_B := [-B,B]^{J\times 6}\subset\mathbb{R}^{J\times 6},
    \qquad B>1.
\end{equation}
Thus the optimized pose variable is the concatenated $6$D vector
\begin{equation}
\label{eq:sixd_pose_domain_supp}
    \boldsymbol{\theta}
    =
    (\mathbf{r}_1,\ldots,\mathbf{r}_J)
    \in
    \Theta := \mathcal{D}^J\cap\Omega_B.
\end{equation}
The associated output of the Gram--Schmidt map is not $6$D; it is a tuple of rotation matrices
$(\pi_{6D}(\mathbf{r}_1),\ldots,\pi_{6D}(\mathbf{r}_J))\in SO(3)^J$, which is what SMPL uses to produce the mesh.

\begin{definition}[Extended exact collision indicator]
For a fixed shape $\boldsymbol{\beta}$, we define the exact collision indicator on the full
bounded pose box as a map
$\iota_{\boldsymbol{\beta}}:\Omega_B\to\{-1,+1\}$.  For non-degenerate poses
$\boldsymbol{\theta}\in\Theta$, it is obtained by decoding the mesh
$X=\mathcal{M}(\boldsymbol{\beta},\boldsymbol{\theta})$ and applying an exact mesh
self-intersection test, e.g., a classical collision detector such as FCL~\cite{pan2012fcl}:
\begin{equation}
\label{eq:binary_collision_indicator_supp}
    \iota_{\boldsymbol{\beta}}(\boldsymbol{\theta})
    :=
    \begin{cases}
    \iota\big(\mathcal{M}(\boldsymbol{\beta},\boldsymbol{\theta})\big),
    & \boldsymbol{\theta}\in\Theta,\\
    -1,
    & \boldsymbol{\theta}\in\Omega_B\setminus\Theta,
    \end{cases}
    \qquad \boldsymbol{\theta}\in\Omega_B.
\end{equation}
\end{definition}

Here $\iota_{\boldsymbol{\beta}}(\boldsymbol{\theta})=-1$ denotes a colliding or degenerate input,
and $\iota_{\boldsymbol{\beta}}(\boldsymbol{\theta})=+1$ denotes a non-degenerate collision-free
pose.  The exact collision-free pose set for the fixed shape $\boldsymbol{\beta}$ is
\begin{equation}
\label{eq:exact_feasible_set_supp}
    \mathcal{F}_{\boldsymbol{\beta}}
    :=
    \{\boldsymbol{\theta}\in\Omega_B
    \mid
    \iota_{\boldsymbol{\beta}}(\boldsymbol{\theta})=+1\}.
\end{equation}
Thus all degenerate inputs in $\Omega_B\setminus\Theta$ are treated as infeasible and are
outside $\mathcal{F}_{\boldsymbol{\beta}}$ by definition.

Given a self-colliding SMPL configuration $(\boldsymbol{\beta},\boldsymbol{\theta}_0)$ with
$\iota_{\boldsymbol{\beta}}(\boldsymbol{\theta}_0)=-1$, our goal is to find a corrected pose
$\boldsymbol{\theta}$ whose decoded mesh is collision-free while remaining close to the original
configuration.  Let $d_{\mathrm{SMPL}}(\cdot,\cdot)$ denote a pose discrepancy measure for a fixed
shape.  We formulate SMPL self-collision resolution as
\begin{equation}
\label{eq:smpl_collisionresolution_supp}
\boldsymbol{\theta}^\star
=
\arg\min_{\boldsymbol{\theta}\in\Theta}
    d_{\mathrm{SMPL}}(\boldsymbol{\theta},\boldsymbol{\theta}_0)
\quad
\text{subject to}
\quad
    \boldsymbol{\theta}\in\mathcal{F}_{\boldsymbol{\beta}}.
\end{equation}
In practice, the starting pose $\boldsymbol{\theta}_0$ is a normalized $6$D rotation, hence $\boldsymbol{\theta}_0\in[-1,1]^{J\times 6}$.

\subsection{Indicator-Induced Signed Distance Function}
\label{sec:sdf_supp}
We show that the collision indicator induces an SDF in $\Omega_B$. 
\begin{definition}[Indicator-induced pose-space SDF]
The extended exact collision indicator induces the infeasible set
\begin{equation}
\label{eq:indicator_induced_sets_supp}
    \mathcal{C}_{\boldsymbol{\beta}}
    :=
    \iota_{\boldsymbol{\beta}}^{-1}(-1)
    =
    \Omega_B\setminus\mathcal{F}_{\boldsymbol{\beta}}.
\end{equation}
For any nonempty set $A\subset\Omega_B$, define
\begin{equation}
\label{eq:distance_to_set_supp}
    \operatorname{dist}(\boldsymbol{\theta},A)
    :=
    \inf_{\mathbf{y}\in A}\|\boldsymbol{\theta}-\mathbf{y}\|_2,
    \qquad \boldsymbol{\theta}\in\Omega_B .
\end{equation}
When both $\mathcal{F}_{\boldsymbol{\beta}}$ and $\mathcal{C}_{\boldsymbol{\beta}}$ are nonempty, we define the indicator-induced relative signed distance function by
\begin{equation}
\label{eq:true_pose_space_sdf_supp}
    \phi_{\boldsymbol{\beta}}(\boldsymbol{\theta})
    :=
    \operatorname{dist}\big(\boldsymbol{\theta},\mathcal{C}_{\boldsymbol{\beta}}\big)
    -
    \operatorname{dist}\big(\boldsymbol{\theta},\mathcal{F}_{\boldsymbol{\beta}}\big),
    \qquad \boldsymbol{\theta}\in\Omega_B .
\end{equation}
\end{definition}
\begin{theorem}[Properties of the indicator-induced SDF]\label{thm:indicator_sdf_properties_supp} The function $\phi_{\boldsymbol{\beta}}$ in~\eqref{eq:true_pose_space_sdf_supp} is well-defined on
$\Omega_B$ and satisfies
\begin{equation}
\label{eq:indicator_induced_sdf_sign_supp}
    \begin{cases}
    \phi_{\boldsymbol{\beta}}(\boldsymbol{\theta})\geq 0,
    & \boldsymbol{\theta}\in\mathcal{F}_{\boldsymbol{\beta}},\\
    \phi_{\boldsymbol{\beta}}(\boldsymbol{\theta})\leq 0,
    & \boldsymbol{\theta}\in\mathcal{C}_{\boldsymbol{\beta}}.
    \end{cases}
\end{equation}
Moreover,
$\phi_{\boldsymbol{\beta}}$ is Lipschitz continuous and differentiable almost everywhere.  At
differentiability points away from the zero level set where the relevant closest point is unique,
it satisfies the Eikonal property
\begin{equation}
\label{eq:indicator_induced_eikonal_supp}
    \|\nabla\phi_{\boldsymbol{\beta}}(\boldsymbol{\theta})\|_2=1.
\end{equation}
\end{theorem}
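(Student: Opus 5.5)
Well-definedness comes first. By assumption $\mathcal{F}_{\boldsymbol{\beta}}$ and $\mathcal{C}_{\boldsymbol{\beta}}$ are nonempty subsets of the bounded box $\Omega_B$. Hence both infima in~\eqref{eq:distance_to_set_supp} are over nonempty sets, are finite, and are bounded by $\operatorname{diam}(\Omega_B)=2B\sqrt{6J}$. So $\phi_{\boldsymbol{\beta}}$ is a well-defined real-valued function. No regularity of the sets is needed here, since $\operatorname{dist}(\cdot,A)=\operatorname{dist}(\cdot,\overline{A})$.

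For the sign property~\eqref{eq:indicator_induced_sdf_sign_supp}, the two sets partition $\Omega_B$. If $\boldsymbol{\theta}\in\mathcal{F}_{\boldsymbol{\beta}}$, then $\operatorname{dist}(\boldsymbol{\theta},\mathcal{F}_{\boldsymbol{\beta}})=0$, so $\phi_{\boldsymbol{\beta}}(\boldsymbol{\theta})=\operatorname{dist}(\boldsymbol{\theta},\mathcal{C}_{\boldsymbol{\beta}})\ge 0$. The case $\boldsymbol{\theta}\in\mathcal{C}_{\boldsymbol{\beta}}$ is symmetric.

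For Lipschitz continuity, I will use the standard fact that $\operatorname{dist}(\cdot,A)$ is $1$-Lipschitz. For any $\mathbf{y}\in A$ we have $\operatorname{dist}(\boldsymbol{\theta},A)\le\|\boldsymbol{\theta}-\boldsymbol{\theta}'\|+\|\boldsymbol{\theta}'-\mathbf{y}\|$; taking the infimum over $\mathbf{y}$ and symmetrizing gives the claim. A cheap bound would make $\phi_{\boldsymbol{\beta}}$ $2$-Lipschitz, but the sharper bound is easy. On each of the two sets one term vanishes, so $\phi_{\boldsymbol{\beta}}$ is $1$-Lipschitz when restricted to each set. For a pair $\boldsymbol{\theta}\in\mathcal{F}_{\boldsymbol{\beta}}$ and $\boldsymbol{\theta}'\in\mathcal{C}_{\boldsymbol{\beta}}$, we have $\operatorname{dist}(\boldsymbol{\theta},\mathcal{C}_{\boldsymbol{\beta}})\le\|\boldsymbol{\theta}-\boldsymbol{\theta}'\|$ and $\operatorname{dist}(\boldsymbol{\theta}',\mathcal{F}_{\boldsymbol{\beta}})\le\|\boldsymbol{\theta}-\boldsymbol{\theta}'\|$. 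Both terms have opposite signs, so $|\phi(\boldsymbol{\theta})-\phi(\boldsymbol{\theta}')|$ is at most the larger of them, which is $\le\|\boldsymbol{\theta}-\boldsymbol{\theta}'\|$. Since $\Omega_B$ is convex, almost-everywhere differentiability then follows from Rademacher's theorem applied to this Lipschitz function. One can extend it to a Lipschitz function on $\mathbb{R}^{J\times6}$ by McShane extension, or simply work in the interior of the box, since the boundary has measure zero.

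The Eikonal identity is the main step. Take a differentiability point $\boldsymbol{\theta}$ with $\phi_{\boldsymbol{\beta}}(\boldsymbol{\theta})\neq0$, say $\phi>0$; the negative case is symmetric. Then $d:=\operatorname{dist}(\boldsymbol{\theta},\mathcal{C}_{\boldsymbol{\beta}})>0$. By continuity of $\operatorname{dist}(\cdot,\mathcal{C}_{\boldsymbol{\beta}})$, points $\boldsymbol{\theta}'$ near $\boldsymbol{\theta}$ also lie at positive distance from $\mathcal{C}_{\boldsymbol{\beta}}$, so they lie in $\mathcal{F}_{\boldsymbol{\beta}}$. Hence $\phi=\operatorname{dist}(\cdot,\mathcal{C}_{\boldsymbol{\beta}})$ in a neighborhood. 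The $1$-Lipschitz bound gives $\|\nabla\phi\|\le1$.

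For the reverse inequality, take a closest point $\mathbf{y}\in\overline{\mathcal{C}_{\boldsymbol{\beta}}}$; it exists by compactness and is unique by hypothesis. Move along the segment toward $\mathbf{y}$: for $\boldsymbol{\theta}_t=\boldsymbol{\theta}+t(\mathbf{y}-\boldsymbol{\theta})/d$ we get $\operatorname{dist}(\boldsymbol{\theta}_t,\mathcal{C}_{\boldsymbol{\beta}})\le\|\boldsymbol{\theta}_t-\mathbf{y}\|=d-t$. This shows the directional derivative in the unit direction $(\mathbf{y}-\boldsymbol{\theta})/d$ is $\le-1$. Combined with $\|\nabla\phi\|\le1$ and Cauchy–Schwarz, this forces $\nabla\phi(\boldsymbol{\theta})=(\boldsymbol{\theta}-\mathbf{y})/d$, which has unit norm.

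One point needs care: the segment might leave $\Omega_B$. But $\mathbf{y}$ and $\boldsymbol{\theta}$ both lie in the convex box, so the segment stays inside it. Uniqueness of the closest point is not actually needed for the norm identity, only for the gradient formula; I will remark on this.
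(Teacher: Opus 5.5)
Your proof follows the same route as the paper's. Finiteness comes from nonemptiness, the sign property from the vanishing of one distance term, and a.e. differentiability from Lipschitz continuity plus Rademacher. Off the zero set you reduce locally to a single distance function and use its unit-gradient property. The only difference is that you prove that last standard fact rather than citing it, via the segment to a closest point and Cauchy--Schwarz. You also correctly observe that uniqueness of the closest point is not needed for the norm identity.

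One sentence in the Lipschitz paragraph is wrong, though. In the cross case $\boldsymbol{\theta}\in\mathcal{F}_{\boldsymbol{\beta}}$, $\boldsymbol{\theta}'\in\mathcal{C}_{\boldsymbol{\beta}}$, the opposite signs make $|\phi_{\boldsymbol{\beta}}(\boldsymbol{\theta})-\phi_{\boldsymbol{\beta}}(\boldsymbol{\theta}')|$ equal to the \emph{sum} $\operatorname{dist}(\boldsymbol{\theta},\mathcal{C}_{\boldsymbol{\beta}})+\operatorname{dist}(\boldsymbol{\theta}',\mathcal{F}_{\boldsymbol{\beta}})$, not at most the larger term. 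Your two bounds therefore only give $2\|\boldsymbol{\theta}-\boldsymbol{\theta}'\|$. For example, in 1D with $\mathcal{F}=[0,B]$, $\boldsymbol{\theta}=1/2$ and $\boldsymbol{\theta}'=-1/2$, the difference is $1$ while each term is $1/2$.

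The $1$-Lipschitz claim is still true, and you can repair it with the segment $[\boldsymbol{\theta},\boldsymbol{\theta}']\subset\Omega_B$. Let $\mathbf{z}$ be the point at $t^\ast=\inf\{t:\boldsymbol{\theta}+t(\boldsymbol{\theta}'-\boldsymbol{\theta})\in\mathcal{C}_{\boldsymbol{\beta}}\}$. This point lies in $\overline{\mathcal{F}_{\boldsymbol{\beta}}}\cap\overline{\mathcal{C}_{\boldsymbol{\beta}}}$, so the two terms are bounded by $\|\boldsymbol{\theta}-\mathbf{z}\|$ and $\|\mathbf{z}-\boldsymbol{\theta}'\|$, which sum to $\|\boldsymbol{\theta}-\boldsymbol{\theta}'\|$.

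You do not need this refinement, however. The theorem only asks for Lipschitz continuity, for which your 2-Lipschitz bound suffices. Your bound $\|\nabla\phi_{\boldsymbol{\beta}}\|\le 1$ in the Eikonal step only uses that $\phi_{\boldsymbol{\beta}}$ locally coincides with the $1$-Lipschitz function $\operatorname{dist}(\cdot,\mathcal{C}_{\boldsymbol{\beta}})$ (or its negative counterpart). The error is therefore not load-bearing.
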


\begin{proof}
By the nonemptiness condition in Definition~3, the two distance-to-set terms are finite on
$\Omega_B$.  The sign property follows
from $\operatorname{dist}(\boldsymbol{\theta},\mathcal{F}_{\boldsymbol{\beta}})=0$ for
$\boldsymbol{\theta}\in\mathcal{F}_{\boldsymbol{\beta}}$ and
$\operatorname{dist}(\boldsymbol{\theta},\mathcal{C}_{\boldsymbol{\beta}})=0$ for
$\boldsymbol{\theta}\in\mathcal{C}_{\boldsymbol{\beta}}$.  Each distance-to-set function is
Lipschitz continuous, so $\phi_{\boldsymbol{\beta}}$ is Lipschitz continuous; by Rademacher's
theorem, it is differentiable almost everywhere.  Finally, on either side of the zero level set,
$\phi_{\boldsymbol{\beta}}$ locally reduces to either
$\operatorname{dist}(\cdot,\mathcal{C}_{\boldsymbol{\beta}})$ or
$-\operatorname{dist}(\cdot,\mathcal{F}_{\boldsymbol{\beta}})$.  The standard Euclidean distance
function satisfies
$\nabla\operatorname{dist}(\boldsymbol{\theta},A)=(\boldsymbol{\theta}-\mathbf{p})/
\|\boldsymbol{\theta}-\mathbf{p}\|_2$ at differentiability points with unique closest point
$\mathbf{p}\in \overline{A}$, and hence has unit gradient norm.  This proves the Eikonal property
in~\eqref{eq:indicator_induced_eikonal_supp}.
\end{proof}
Suppose the SMPL distance is simply the Euclidean distance in the
$6$D pose coordinates,
\begin{equation}
\label{eq:special_case_euclidean_distance_supp}
    d_{\mathrm{SMPL}}(\boldsymbol{\theta},\boldsymbol{\theta}_0)
    =
    \|\boldsymbol{\theta}-\boldsymbol{\theta}_0\|_2^2.
\end{equation}

For an infeasible pose $\boldsymbol{\theta}_0\in\mathcal{C}_{\boldsymbol{\beta}}$, the optimal value of \cref{eq:smpl_collisionresolution_supp} equals the distance to the closure $\overline{\mathcal{F}_{\boldsymbol{\beta}}}$~\cite{marz2012calculus}
\begin{equation}
\label{eq:special_case_closest_point_supp}
    \inf_{\boldsymbol{\theta}\in\mathcal{F}_{\boldsymbol{\beta}}}\|\boldsymbol{\theta}-\boldsymbol{\theta}_0\|_2
    =
    \min_{\boldsymbol{\theta}\in\overline{\mathcal{F}_{\boldsymbol{\beta}}}}
    \|\boldsymbol{\theta}-\boldsymbol{\theta}_0\|_2,
\end{equation}
which is attained in $\mathcal{F}_{\boldsymbol{\beta}}$ (and then solves \cref{eq:smpl_collisionresolution_supp}) when $\mathcal{F}_{\boldsymbol{\beta}}$ is closed.

\subsection{Assumptions on the Neural Collision Field}
\label{sec:learned_approximation_supp}
The exact SDF $\phi_{\boldsymbol{\beta}}$ is intractable in the high-dimensional space $\Omega_B$. To obtain a differentiable surrogate, we learn a
neural collision field on the full bounded $6$D box
\begin{equation}
\label{eq:neural_collision_field_supp}
    g_{\boldsymbol{\beta}}:\Omega_B\rightarrow\mathbb{R}.
\end{equation}
 Since the shape is clear from context, we denote it by $g(\boldsymbol{\theta})$. For the following formulation, we state three idealized assumptions on
$g$.

\begin{assumption}[Smoothness]\label{asmp:smoothness_supp}
The learned field is twice continuously differentiable on the bounded domain, with Lipschitz-continuous gradient and Hessian:
\begin{equation}
\label{eq:assumption_smoothness_supp}
    g\in C^2(\Omega_B).
\end{equation}
\end{assumption}
This is satisfied by a multi-layer perceptron (MLP) with standard Softplus activations, which is
in fact smooth.

\begin{assumption}[Feasibility consistency]\label{asmp:feasibility_supp}
The non-negative superlevel set of the learned field exactly recovers the collision-free set:
\begin{equation}
\label{eq:assumption_conservative_feasibility_supp}
    \mathcal{F}
    =
    \{\boldsymbol{\theta}\in\Omega_B
    \mid
    g(\boldsymbol{\theta})\geq0\}.
\end{equation}
Equivalently, for non-degenerate inputs,
\begin{equation}
\label{eq:assumption_indicator_implications_supp}
    \iota_{\boldsymbol{\beta}}(\boldsymbol{\theta})=+1
    \Rightarrow
    g(\boldsymbol{\theta})\geq0,
    \qquad
    \iota_{\boldsymbol{\beta}}(\boldsymbol{\theta})=-1
    \Rightarrow
    g(\boldsymbol{\theta})< 0,
    \quad
    \boldsymbol{\theta}\in\Theta,
\end{equation}
and all degenerate inputs are also infeasible:
\begin{equation}
\label{eq:assumption_degenerate_nonpositive_supp}
    \boldsymbol{\theta}\in\Omega_B\setminus\Theta
    \Rightarrow
    g(\boldsymbol{\theta})< 0.
\end{equation}
Thus $g\ge 0$ denotes collision-free poses, while $g<0$
denotes colliding or degenerate inputs.
\end{assumption}

\begin{assumption}[Approximate Eikonal property]\label{asmp:approx_eikonal_supp}
    There exists a constant $\delta\in[0,1)$ such that the learned field satisfies
\begin{equation}
\label{eq:assumption_approx_eikonal_supp}
    1-\delta
    \leq
    \|\nabla_{\boldsymbol{\theta}} g(\boldsymbol{\theta})\|_2
    \leq
    1+\delta,
    \qquad \forall\boldsymbol{\theta}\in\Omega_B.
\end{equation}
\end{assumption}
This assumption makes $g$ an approximate signed distance function (SDF) in pose
space: its gradient is non-vanishing near the collision boundary, and
$|g_{\boldsymbol{\beta}}(\boldsymbol{\theta})|$ can be interpreted as an approximate distance to the
learned boundary $g(\boldsymbol{\theta})=0$.

With this surrogate constraint, the collision-resolution problem becomes
\begin{equation}
\label{eq:smpl_collisionresolution_smooth_supp}
\boldsymbol{\theta}^\star
=
\arg\min_{\boldsymbol{\theta}\in\Theta}
    d_{\mathrm{SMPL}}(\boldsymbol{\theta},\boldsymbol{\theta}_0)
\quad
\text{subject to}
\quad
    g(\boldsymbol{\theta}) \geq 0 .
\end{equation}

The training objective for $g$ is designed to encourage these assumptions to hold approximately.

\subsection{Convergence Analysis}
\label{sec:convergence_supp}
Assuming a $g$ that satisfies the assumptions above, we establish the convergence properties of the SLSQP algorithm in \cref{eq:smpl_collisionresolution_smooth_supp}. For simplicity, the subsequent analysis assumes that all iterates $\boldsymbol{\theta}_k$ remain bounded within $\Omega_B$. Throughout, we take the squared-Euclidean objective $d_{\mathrm{SMPL}}(\boldsymbol{\theta},\boldsymbol{\theta}_0)=\|\boldsymbol{\theta}-\boldsymbol{\theta}_0\|^2$ and assume the minimizer is interior to $\Omega_B$ and non-degenerate, so that $g\ge 0$ is the only active constraint. We recall that $\boldsymbol{\theta}_0\in[-1,1]^{J\times 6}$. If necessary, $B$ could be expanded to a sufficiently large value, such as $100$, yielding analogous results.

\begin{theorem}[Global Convergence and Complexity]\label{thm:global_supp}
Consider problem~\eqref{eq:smpl_collisionresolution_smooth_supp} under Assumptions~\ref{asmp:smoothness_supp},~\ref{asmp:feasibility_supp} and~\ref{asmp:approx_eikonal_supp}. Then:
\begin{enumerate}
\item[\textup{(i)}] \textbf{Global LICQ:} The constraint qualification holds globally on $\Omega_B$.
\item[\textup{(ii)}] \textbf{Global Convergence:} From any starting pose $\boldsymbol{\theta}_{0} \in \Omega_B$, a standard line-search SQP method with an $\ell_1$ merit function (and sufficiently large penalty parameter) produces iterates whose every accumulation point is a first-order KKT point $(\boldsymbol{\theta}^\star, \lambda^\star)$.
\item[\textup{(iii)}] \textbf{Iteration Complexity:} An $\varepsilon$-approximate KKT point---satisfying
$$\bigl\| 2(\boldsymbol{\theta}_k - \boldsymbol{\theta}_0) - \lambda_k \nabla_{\boldsymbol{\theta}} g(\boldsymbol{\theta}_k) \bigr\| \leq \varepsilon, \quad |\min(0,\, g(\boldsymbol{\theta}_k))| \leq \varepsilon, \quad \lambda_k \geq 0, \quad |\lambda_k\, g(\boldsymbol{\theta}_k)| \leq \varepsilon,$$
is no harder to obtain than in unconstrained smooth optimization, whose worst-case first-order complexity is $\mathcal{O}(\varepsilon^{-2})$.
\end{enumerate}
\end{theorem}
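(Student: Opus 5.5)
Part (i) comes straight from Assumption~\ref{asmp:approx_eikonal_supp}. The problem~\eqref{eq:smpl_collisionresolution_smooth_supp} has a single inequality constraint $c(\boldsymbol{\theta}) := g(\boldsymbol{\theta}) \ge 0$, because we have assumed the minimizer is interior to $\Omega_B$ and non-degenerate, so the box and the degeneracy sets are inactive. LICQ at a point $\boldsymbol{\theta}$ asks only that the gradients of the active constraints be linearly independent. With one constraint this reduces to $\nabla g(\boldsymbol{\theta}) \neq 0$ whenever $g(\boldsymbol{\theta}) = 0$. Assumption~\ref{asmp:approx_eikonal_supp} gives $\|\nabla g(\boldsymbol{\theta})\| \ge 1-\delta > 0$ at every $\boldsymbol{\theta} \in \Omega_B$, not only on the zero level set. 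So LICQ holds at every feasible point. It also gives the stronger statement used below: the linearized constraint $g(\boldsymbol{\theta}_k) + \nabla g(\boldsymbol{\theta}_k)^\top \mathbf{d} \ge 0$ is always feasible, even at infeasible iterates, since one can take $\mathbf{d} = -g(\boldsymbol{\theta}_k)\nabla g(\boldsymbol{\theta}_k)/\|\nabla g(\boldsymbol{\theta}_k)\|^2$.

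For (ii) I will invoke the standard global convergence theorem for line-search SQP with the $\ell_1$ merit function $\phi_\mu(\boldsymbol{\theta}) = \|\boldsymbol{\theta}-\boldsymbol{\theta}_0\|^2 + \mu \max(0,-g(\boldsymbol{\theta}))$ (Nocedal--Wright, Thm.~18.3 style). Its hypotheses are checked as follows.
\begin{itemize}
\item \textbf{Smoothness on a bounded set.} Objective and constraint are $C^2$ with Lipschitz gradients on the compact set $\Omega_B$, by Assumption~\ref{asmp:smoothness_supp} and the quadratic objective. Iterates stay in $\Omega_B$ by the standing assumption.
\item \textbf{Bounded, uniformly positive definite Hessian approximations.} This is a requirement on the algorithm, and BFGS with damping provides it.
\item \textbf{Feasible QP subproblems with bounded multipliers.} This is where (i) enters. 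Bound the QP multiplier $\hat\lambda_k$ from the QP stationarity condition $B_k\mathbf{d}_k + 2(\boldsymbol{\theta}_k-\boldsymbol{\theta}_0) = \hat\lambda_k \nabla g(\boldsymbol{\theta}_k)$. Dividing by $\|\nabla g\| \ge 1-\delta$ gives $\hat\lambda_k \le (\|B_k\|\|\mathbf{d}_k\| + 2\,\mathrm{diam}\,\Omega_B)/(1-\delta)$, which is uniformly bounded.
\end{itemize}
With this bound, a penalty $\mu > \sup_k \hat\lambda_k$ makes the QP step a descent direction for $\phi_\mu$. Armijo backtracking plus Lipschitz gradients then gives $\sum_k \|\mathbf{d}_k\|^2 < \infty$, hence $\mathbf{d}_k \to 0$. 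Passing to the limit along any convergent subsequence, the QP optimality conditions become the KKT conditions at the accumulation point, with $\lambda^\star \ge 0$ and complementarity.

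The main obstacle I expect is excluding the bad outcome typical of $\ell_1$-penalty methods: convergence to a stationary point of the infeasibility measure $\max(0,-g)$ that is not feasible. This is the step where the Eikonal lower bound must be used globally, not just on the boundary. At an infeasible point, $\nabla(-g) \ne 0$, so the infeasibility measure has no stationary points outside $\mathcal{F}$. Combined with $\mu$ exceeding the multiplier bound, every accumulation point is therefore feasible. Feasibility consistency (Assumption~\ref{asmp:feasibility_supp}) then identifies it as collision-free.

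For (iii), I will argue through the exact penalty $\phi_\mu$ with $\mu$ above the uniform multiplier bound. Its first-order stationarity measure (the norm of the QP step, equivalently the criticality measure of the linearized model) controls the $\varepsilon$-KKT residuals up to constants depending on $\delta$, $B$ and $\mu$. The residual $\|2(\boldsymbol{\theta}_k-\boldsymbol{\theta}_0) - \lambda_k\nabla g\|$ and complementarity follow from the QP conditions. The bound $|\min(0,g)| \le \varepsilon$ follows because $\|\nabla g\| \ge 1-\delta$ turns stationarity of the penalty term into a bound on the violation. Then I invoke the known complexity of first-order / trust-region methods for composite functions $h(c(x))$ with $h$ convex Lipschitz and $c$ having Lipschitz Jacobian (Cartis--Gould--Toint). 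It gives $\mathcal{O}(\varepsilon^{-2})$ iterations to reach criticality $\varepsilon$, matching the unconstrained smooth rate. The constant depends on $\phi_\mu(\boldsymbol{\theta}_0) - \inf\phi_\mu$, which is finite since $\Omega_B$ is bounded.
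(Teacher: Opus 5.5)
Your proposal is correct and follows essentially the same route as the paper. (i) You read off LICQ from $\|\nabla g\|\ge 1-\delta$. (ii) You apply the standard $\ell_1$-merit line-search SQP theorem, after using the Eikonal lower bound to show that the linearized constraint is always feasible (the paper's step $t\nabla g$ is your $-g\nabla g/\|\nabla g\|^2$) and that no infeasible stationary points exist. (iii) You argue that a constraint gradient bounded uniformly away from zero keeps the penalty parameter bounded, so the unconstrained $\mathcal{O}(\varepsilon^{-2})$ rate carries over.

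Your version is more explicit than the paper's in two places. In (ii) you bound the QP multipliers $\hat\lambda_k$, which is what the descent property of the merit function actually requires, whereas the paper only bounds the true KKT multiplier through $\mu>\tfrac{2}{1-\delta}\operatorname{diam}(\Omega_B)$. That bound also needs $\|\mathbf{d}_k\|$ to be bounded, which follows from uniform positive definiteness of $B_k$ together with the bounded feasible step above. In (iii) you cite the Cartis--Gould--Toint composite-function complexity and translate penalty criticality into the four $\varepsilon$-KKT conditions, whereas the paper argues only heuristically from $\sigma_{\min}(J)\ge 1-\delta$.
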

\begin{proof}

We establish the three claims in sequence.

\textbf{Part (i): Global LICQ.}\;
For a single inequality constraint \(g(\boldsymbol{\theta}) \geq 0\), LICQ requires that the gradient of the active constraint be nonzero. The approximate Eikonal assumption gives
\[
\|\nabla_{\boldsymbol{\theta}} g(\boldsymbol{\theta})\| \geq 1 - \delta > 0
\qquad
(\text{since } \delta < 1)
\]
for all \(\boldsymbol{\theta} \in \Omega_B\), so LICQ holds strictly and globally.

\textbf{Part (ii): Global Convergence.}\;
We use SQP with the \(\ell_1\) exact penalty merit function
\[
\phi(\boldsymbol{\theta};\,\mu)
=
d_{\mathrm{SMPL}}(\boldsymbol{\theta},\boldsymbol{\theta}_0)
+
\mu \max(0,\,-g(\boldsymbol{\theta})),
\]
where \(\mu > \tfrac{2}{1-\delta}\operatorname{diam}(\Omega_B)\) is a penalty parameter. At each iteration, a search direction \(d_k\) is obtained by solving a QP subproblem that linearizes the constraint, and a line search on \(\phi\) ensures progress.

The primary failure mode in nonconvex settings is stagnation at an \emph{infeasible stationary point}: a point \(\boldsymbol{\theta}\) where \(g(\boldsymbol{\theta}) < 0\) but \(\nabla_{\boldsymbol{\theta}} g(\boldsymbol{\theta}) = 0\). At such a point, the linearized feasibility condition
\[
g(\boldsymbol{\theta}_k)
+
\nabla_{\boldsymbol{\theta}} g(\boldsymbol{\theta}_k)^\top d
\geq 0
\]
reduces to the false statement \(g(\boldsymbol{\theta}_k) \geq 0\), so no direction \(d\) can improve feasibility in the linear model and the QP subproblem degenerates.

The approximate Eikonal assumption eliminates this failure mode. At any infeasible point \(\boldsymbol{\theta}\) with \(g(\boldsymbol{\theta}) < 0\), the direction
\[
d = t\,\nabla_{\boldsymbol{\theta}} g(\boldsymbol{\theta}),
\qquad t>0,
\]
satisfies
\[
g(\boldsymbol{\theta})
+
\nabla_{\boldsymbol{\theta}} g(\boldsymbol{\theta})^\top d
=
g(\boldsymbol{\theta})
+
t\,\|\nabla_{\boldsymbol{\theta}} g(\boldsymbol{\theta})\|^2
\geq
g(\boldsymbol{\theta}) + t(1-\delta)^2,
\]
which is nonnegative for
\[
t \geq \frac{|g(\boldsymbol{\theta})|}{(1-\delta)^2}.
\]
Thus, the linearized constraint always admits a feasible direction and the QP subproblem is always strictly feasible.

Since \(\|\nabla_{\boldsymbol{\theta}} d_{\mathrm{SMPL}}\|=2\|\boldsymbol{\theta}-\boldsymbol{\theta}_0\|\le 2\operatorname{diam}(\Omega_B)\), the choice of \(\mu\) gives \(\|\nabla_{\boldsymbol{\theta}}\phi(\boldsymbol{\theta};\mu)\|\ge \mu(1-\delta)-2\operatorname{diam}(\Omega_B)>0\) whenever \(g(\boldsymbol{\theta})<0\); hence \(\phi(\cdot;\mu)\) has no infeasible stationary point and is exact.

Moreover,
\[
d_{\mathrm{SMPL}}(\boldsymbol{\theta},\boldsymbol{\theta}_0)
=
\|\boldsymbol{\theta}-\boldsymbol{\theta}_0\|^2
\]
is coercive, so the sublevel set
\[
\left\{
\boldsymbol{\theta} :
\phi(\boldsymbol{\theta};\,\mu)
\leq
\phi(\boldsymbol{\theta}_{0};\,\mu)
\right\}
\]
is compact and the iterates \(\{\boldsymbol{\theta}_k\}\) remain bounded.

With (i) bounded iterates, (ii) Lipschitz continuous gradients and Hessians, and (iii) uniformly full-rank constraint Jacobian
\[
\|\nabla_{\boldsymbol{\theta}} g(\boldsymbol{\theta})\|
\geq
1-\delta > 0
\qquad
\text{for all } \boldsymbol{\theta}\in\Omega_B,
\]
The hypotheses of the standard SQP global convergence theorem are satisfied. The line search ensures sufficient decrease of \(\phi\) at each iteration, and every limit point of \(\{\boldsymbol{\theta}_k\}\) is a first-order KKT point.

\textbf{Part (iii): \(\mathcal{O}(\epsilon^{-2})\) Iteration Complexity.}\;
In unconstrained nonconvex optimization with \(L\)-Lipschitz gradient, gradient descent requires at most \(\mathcal{O}(\epsilon^{-2})\) iterations to find an \(\epsilon\)-stationary point.

In constrained optimization, the complexity additionally depends on the conditioning of the constraint Jacobian
\[
J(\boldsymbol{\theta})
=
\nabla_{\boldsymbol{\theta}} g(\boldsymbol{\theta})^\top
\in
\mathbb{R}^{1\times N}.
\]
Its minimum singular value \(\sigma_{\min}(J(\boldsymbol{\theta}))\) governs how effectively the solver projects steps onto the feasible region. If \(\sigma_{\min}\to 0\), the penalty parameter \(\mu\) must grow as \(\mathcal{O}(1/\sigma_{\min})\) to enforce feasibility, step sizes shrink, and complexity degrades.

Under the approximate Eikonal assumption,
\[
J(\boldsymbol{\theta})J(\boldsymbol{\theta})^\top
=
\|\nabla_{\boldsymbol{\theta}} g(\boldsymbol{\theta})\|^2
\geq
(1-\delta)^2,
\]
so
\[
\sigma_{\min}(J(\boldsymbol{\theta}))
\geq
1-\delta > 0
\qquad
\text{for all } \boldsymbol{\theta}\in\Omega_B.
\]
The constraint Jacobian therefore has a singular value bounded uniformly away from zero over the entire domain. Consequently, the penalty parameter \(\mu\) remains \(\mathcal{O}(1/(1-\delta))\), and the constrained problem inherits the worst-case evaluation complexity of unconstrained smooth optimization. 
\end{proof}

\begin{theorem}[Local Convergence]\label{thm:local_supp}
Consider problem~\eqref{eq:smpl_collisionresolution_smooth_supp} under Assumptions~\ref{asmp:smoothness_supp},~\ref{asmp:feasibility_supp} and~\ref{asmp:approx_eikonal_supp}. Suppose the initial pose $\boldsymbol{\theta}_0$ is infeasible ($g(\boldsymbol{\theta}_0) < 0$). Let $\boldsymbol{\theta}^\star$ be a local minimizer. Then:
\begin{enumerate}
\item[\textup{(i)}] \textbf{LICQ and Strict Complementarity:} LICQ holds at $\boldsymbol{\theta}^\star$, and the unique KKT multiplier satisfies $0 < \frac{2}{1+\delta}\|\boldsymbol{\theta}^\star - \boldsymbol{\theta}_0\| \leq \lambda^\star \leq \frac{2}{1-\delta}\|\boldsymbol{\theta}^\star - \boldsymbol{\theta}_0\|$.
\item[\textup{(ii)}] \textbf{SOSC and Fast Convergence:} Define $\kappa \triangleq \lambda^\star\|\nabla_{\boldsymbol{\theta}}^2 g(\boldsymbol{\theta}^\star)\|_2$. If $\kappa < 2$, then the full Lagrangian Hessian is positive definite (implying Second-Order Sufficient Conditions), and SQP with exact Hessian converges locally to $(\boldsymbol{\theta}^\star, \lambda^\star)$ at a quadratic rate. A quasi-Newton (BFGS) variant satisfying the Dennis--Mor\'e condition converges superlinearly.
\end{enumerate}
\end{theorem}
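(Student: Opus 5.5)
Part (i) splits into three steps: LICQ at $\boldsymbol{\theta}^\star$, activity of the constraint, and the multiplier bounds.

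\textbf{LICQ.} This is immediate from Theorem~\ref{thm:global_supp}(i): Assumption~\ref{asmp:approx_eikonal_supp} gives $\|\nabla g(\boldsymbol{\theta}^\star)\|\ge 1-\delta>0$. Hence the KKT conditions hold at $\boldsymbol{\theta}^\star$, and the multiplier is unique because the constraint Jacobian is a single nonzero row. The conditions read
\[
2(\boldsymbol{\theta}^\star-\boldsymbol{\theta}_0)=\lambda^\star\nabla g(\boldsymbol{\theta}^\star),\qquad \lambda^\star\ge0,\qquad \lambda^\star g(\boldsymbol{\theta}^\star)=0 .
\]

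\textbf{Activity.} Since $g(\boldsymbol{\theta}_0)<0$, we have $\boldsymbol{\theta}^\star\neq\boldsymbol{\theta}_0$. Suppose the constraint were inactive at $\boldsymbol{\theta}^\star$. Then $\lambda^\star=0$, and the stationarity equation forces $\boldsymbol{\theta}^\star=\boldsymbol{\theta}_0$, a contradiction. So $g(\boldsymbol{\theta}^\star)=0$. Moreover the left-hand side of stationarity is nonzero, so $\lambda^\star\neq0$, i.e. $\lambda^\star>0$, which is strict complementarity.

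\textbf{Multiplier bounds.} Take norms in the stationarity equation to get $\lambda^\star=2\|\boldsymbol{\theta}^\star-\boldsymbol{\theta}_0\|/\|\nabla g(\boldsymbol{\theta}^\star)\|$. Inserting $1-\delta\le\|\nabla g\|\le1+\delta$ gives the two-sided bound. Positivity of the lower bound follows from $\boldsymbol{\theta}^\star\neq\boldsymbol{\theta}_0$.

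For part (ii) I use the Lagrangian $\mathcal{L}(\boldsymbol{\theta},\lambda)=\|\boldsymbol{\theta}-\boldsymbol{\theta}_0\|^2-\lambda g(\boldsymbol{\theta})$. Its Hessian in $\boldsymbol{\theta}$ is $2I-\lambda^\star\nabla^2 g(\boldsymbol{\theta}^\star)$. By Weyl's inequality its smallest eigenvalue is at least $2-\lambda^\star\|\nabla^2 g(\boldsymbol{\theta}^\star)\|_2=2-\kappa>0$. So the Hessian is positive definite on all of $\mathbb{R}^{d}$, hence on the critical cone. Together with LICQ and strict complementarity, this gives SOSC.

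It remains to transfer these conditions into rates. I would invoke the standard local SQP theorem (Nocedal--Wright, Thm.~18.4 / Bertsekas). Its hypotheses are $C^2$ data with Lipschitz Hessian (Assumption~\ref{asmp:smoothness_supp}), LICQ, strict complementarity and SOSC. Under these, the KKT matrix
\[
\begin{pmatrix}\nabla^2_{\boldsymbol{\theta}\boldsymbol{\theta}}\mathcal{L} & -\nabla g\\ \nabla g^\top & 0\end{pmatrix}
\]
is nonsingular. Strict complementarity makes the active set identified correctly near the solution, so the method reduces to Newton's method on the KKT equations. Newton's method with a nonsingular Jacobian and Lipschitz second derivatives converges quadratically. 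For the BFGS variant I would cite the Boggs--Tolle--Wang/Dennis--Mor\'e characterization, which gives superlinear convergence. One caveat: BFGS updates stay positive definite here, which is consistent with the positive definiteness of $\nabla^2\mathcal{L}$ shown above.

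The main obstacle is rigor in invoking these off-the-shelf SQP theorems. Two points need care. First, the iterates are confined to $\Theta\cap\Omega_B$, so I would use the assumed interiority of $\boldsymbol{\theta}^\star$ to keep a whole neighbourhood inside $\Theta$. Second, the active-set identification needs strict complementarity, which is exactly why activity and $\lambda^\star>0$ are established first in part (i).
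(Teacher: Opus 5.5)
Your proposal is correct and follows essentially the same route as the paper. Both arguments get LICQ from the Eikonal lower bound, derive activity and $\lambda^\star>0$ from $\boldsymbol{\theta}^\star\neq\boldsymbol{\theta}_0$, and take norms in the stationarity equation to bound the multiplier. Both then show $2I-\lambda^\star\nabla^2 g\succeq(2-\kappa)I$ for SOSC and reduce SQP to Newton's method on a KKT system with nonsingular matrix (Dennis--Mor\'e for BFGS). The only cosmetic difference is how you show the constraint is active: you use complementary slackness ($\lambda^\star=0$ would force $\boldsymbol{\theta}^\star=\boldsymbol{\theta}_0$), while the paper notes that an inactive local minimizer would be an unconstrained minimizer of $\|\boldsymbol{\theta}-\boldsymbol{\theta}_0\|^2$, which is the same fact.
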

\begin{proof}
\textbf{Part (i): LICQ and Strict Complementarity.}\;
The approximate Eikonal assumption gives
\[
\|\nabla_{\boldsymbol{\theta}} g(\boldsymbol{\theta}^\star)\|
\geq
1-\delta > 0,
\]
so LICQ holds at \(\boldsymbol{\theta}^\star\).

Since \(\boldsymbol{\theta}_0\) is strictly infeasible (\(g(\boldsymbol{\theta}_0)<0\)) and the unconstrained minimizer of \(d_{\mathrm{SMPL}}(\boldsymbol{\theta},\boldsymbol{\theta}_0)\) is \(\boldsymbol{\theta}_0\) itself, any constrained local minimizer must lie on the boundary \(g(\boldsymbol{\theta}^\star)=0\). Moreover, \(\boldsymbol{\theta}^\star \neq \boldsymbol{\theta}_0\) because \(\boldsymbol{\theta}_0\) is infeasible while \(\boldsymbol{\theta}^\star\) is feasible.

Because LICQ holds, the KKT conditions are necessary at \(\boldsymbol{\theta}^\star\). The stationarity condition
\[
\nabla_{\boldsymbol{\theta}} L(\boldsymbol{\theta}^\star,\lambda^\star)=0
\]
requires
\[
2(\boldsymbol{\theta}^\star-\boldsymbol{\theta}_0)
=
\lambda^\star \nabla_{\boldsymbol{\theta}} g(\boldsymbol{\theta}^\star).
\]
Taking norms on both sides gives
\[
2\|\boldsymbol{\theta}^\star-\boldsymbol{\theta}_0\|
=
|\lambda^\star|\,\|\nabla_{\boldsymbol{\theta}} g(\boldsymbol{\theta}^\star)\|.
\]
Using
\[
1-\delta
\leq
\|\nabla_{\boldsymbol{\theta}} g(\boldsymbol{\theta}^\star)\|
\leq
1+\delta,
\]
we obtain
\[
|\lambda^\star|(1-\delta)
\leq
2\|\boldsymbol{\theta}^\star-\boldsymbol{\theta}_0\|
\leq
|\lambda^\star|(1+\delta).
\]
Since \(\boldsymbol{\theta}^\star \neq \boldsymbol{\theta}_0\), we have \(\|\boldsymbol{\theta}^\star-\boldsymbol{\theta}_0\|>0\), which implies \(|\lambda^\star|>0\). Combined with dual feasibility \(\lambda^\star\geq 0\), this yields
\[
0
<
\frac{2}{1+\delta}\|\boldsymbol{\theta}^\star-\boldsymbol{\theta}_0\|
\leq
\lambda^\star
\leq
\frac{2}{1-\delta}\|\boldsymbol{\theta}^\star-\boldsymbol{\theta}_0\|,
\]
establishing strict complementarity.

\textbf{Part (ii): SOSC and Convergence.}\;
The Lagrangian Hessian with respect to \(\boldsymbol{\theta}\) is
\[
\nabla^2_{\boldsymbol{\theta}\boldsymbol{\theta}}
L(\boldsymbol{\theta}^\star,\lambda^\star)
=
\nabla^2_{\boldsymbol{\theta}\boldsymbol{\theta}}
d_{\mathrm{SMPL}}(\boldsymbol{\theta}^\star,\boldsymbol{\theta}_0)
-
\lambda^\star \nabla^2_{\boldsymbol{\theta}} g(\boldsymbol{\theta}^\star)
=
2I - \lambda^\star \nabla^2_{\boldsymbol{\theta}} g(\boldsymbol{\theta}^\star),
\]
where the \(2I\) term comes from
\[
d_{\mathrm{SMPL}}(\boldsymbol{\theta},\boldsymbol{\theta}_0)
=
\|\boldsymbol{\theta}-\boldsymbol{\theta}_0\|^2.
\]

For any nonzero \(v \in \mathbb{R}^{N}\), the spectral norm bound gives
\[
|v^\top \nabla^2_{\boldsymbol{\theta}} g(\boldsymbol{\theta}^\star) v|
\leq
\|\nabla^2_{\boldsymbol{\theta}} g(\boldsymbol{\theta}^\star)\|_2\,\|v\|^2,
\]
and hence
\[
v^\top
\nabla^2_{\boldsymbol{\theta}\boldsymbol{\theta}}
L(\boldsymbol{\theta}^\star,\lambda^\star)
v
\geq
2\|v\|^2
-
\lambda^\star
\|\nabla^2_{\boldsymbol{\theta}} g(\boldsymbol{\theta}^\star)\|_2
\|v\|^2
=
(2-\kappa)\|v\|^2.
\]
When \(\kappa<2\), the coefficient \(2-\kappa>0\), so the full Lagrangian Hessian is positive definite on all of \(\mathbb{R}^{N}\). Since positive definiteness on \(\mathbb{R}^{N}\) implies positive definiteness on any subspace---in particular on the tangent space
\[
\left\{
w : \nabla_{\boldsymbol{\theta}} g(\boldsymbol{\theta}^\star)^\top w = 0
\right\},
\]
the second-order sufficient condition holds.

\textit{Convergence.}\;
Because the constraint is active at \(\boldsymbol{\theta}^\star\) with \(\lambda^\star>0\), the problem locally reduces to the equality-constrained problem
\[
g(\boldsymbol{\theta})=0.
\]
SQP applied to this equality-constrained problem is equivalent to Newton's method on the KKT system \(F(\boldsymbol{\theta},\lambda)=0\), where
\[
F(\boldsymbol{\theta},\lambda)
=
\begin{pmatrix}
2(\boldsymbol{\theta}-\boldsymbol{\theta}_0)
-
\lambda\,\nabla_{\boldsymbol{\theta}} g(\boldsymbol{\theta})
\\[3pt]
g(\boldsymbol{\theta})
\end{pmatrix},
\qquad
J(\boldsymbol{\theta},\lambda)
=
\begin{pmatrix}
2I - \lambda \nabla^2_{\boldsymbol{\theta}} g(\boldsymbol{\theta})
&
-\nabla_{\boldsymbol{\theta}} g(\boldsymbol{\theta})
\\[3pt]
\nabla_{\boldsymbol{\theta}} g(\boldsymbol{\theta})^\top
&
0
\end{pmatrix}.
\]
At \((\boldsymbol{\theta}^\star,\lambda^\star)\), the KKT matrix is nonsingular: LICQ ensures \(\nabla_{\boldsymbol{\theta}} g(\boldsymbol{\theta}^\star)\neq 0\), and SOSC ensures invertibility of the reduced Hessian block. Since \(F\) is continuously differentiable with a Lipschitz Jacobian by Assumption~\ref{asmp:smoothness_supp}, the classical Newton convergence theorem guarantees local quadratic convergence: there exist a neighborhood \(\mathcal{N}\) of \((\boldsymbol{\theta}^\star,\lambda^\star)\) and a constant \(M>0\) such that
\[
\left\|
\begin{pmatrix}
\boldsymbol{\theta}_{k+1}-\boldsymbol{\theta}^\star
\\[2pt]
\lambda_{k+1}-\lambda^\star
\end{pmatrix}
\right\|
\leq
M
\left\|
\begin{pmatrix}
\boldsymbol{\theta}_k-\boldsymbol{\theta}^\star
\\[2pt]
\lambda_k-\lambda^\star
\end{pmatrix}
\right\|^2.
\]

In practice, SLSQP uses a BFGS approximation of the Lagrangian Hessian. Because
\[
\nabla^2_{\boldsymbol{\theta}\boldsymbol{\theta}}
L(\boldsymbol{\theta}^\star,\lambda^\star)
\]
is positive definite when \(\kappa<2\), the BFGS updates maintain positive definiteness and, under the Dennis--Mor\'e condition, yield superlinear convergence.
\end{proof}

The convergence guarantees above depend critically on the approximate Eikonal property (Assumption~\ref{asmp:approx_eikonal_supp}). We next provide a theoretical justification for the loss term used to encourage this property during training.

\subsection{Theoretical Justification of the Eikonal Loss}
\label{sec:eikonal_loss_property_supp}
To support our Eikonal regularization $\mathcal{L}_{\mathrm{grad}}$, we demonstrate that minimizing $\mathcal{L}_{\mathrm{grad}}$ can provide a volume bound of regions where Assumption~\ref{asmp:approx_eikonal_supp} fails. For analysis, we assume a uniform probability measure over $\Omega_B$. Recall that:
\begin{equation}
    \mathcal{L}_{\mathrm{grad}} = \mathbb{E}_{\boldsymbol{\theta}\sim\Omega_B}
\Big[
\big|\|\nabla g(\boldsymbol{\theta})\|-1\big|
\Big].
\end{equation}

\begin{proposition}[Volume Bound on Approximate Eikonal Failure]\label{prop:approx_eikonal_supp}Let $S_\delta$ denote the region where the approximate Eikonal condition fails for a given margin $\delta \in (0,1)$:$$S_\delta = \left\{ \boldsymbol{\theta} \in \Omega_B \;\middle|\; \big|\|\nabla_{\boldsymbol{\theta}} g(\boldsymbol{\theta})\| - 1\big| > \delta \right\}.$$If the Eikonal regularization satisfies $\mathcal{L}_{\mathrm{grad}} \le \epsilon$, then the probability measure of this failure region is bounded by:

\[\mathbb{P}(\boldsymbol{\theta} \in S_\delta) \le \frac{\epsilon}{\delta}.\]

\end{proposition}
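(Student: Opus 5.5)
This is a direct application of Markov's inequality to the nonnegative random variable
\[
Z(\boldsymbol{\theta}) := \big|\|\nabla_{\boldsymbol{\theta}} g(\boldsymbol{\theta})\| - 1\big|,
\]
where $\boldsymbol{\theta}$ is drawn from the uniform probability measure on $\Omega_B$. By definition, $\mathcal{L}_{\mathrm{grad}} = \mathbb{E}[Z]$, and $S_\delta = \{\boldsymbol{\theta}\in\Omega_B \mid Z(\boldsymbol{\theta}) > \delta\}$.

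We first check that $Z$ is a legitimate random variable. By Assumption~\ref{asmp:smoothness_supp}, $g\in C^2(\Omega_B)$, so $\nabla g$ is continuous. Hence $Z$ is continuous, and in particular Borel measurable, bounded on the compact box $\Omega_B$, and nonnegative. Then $S_\delta$ is a (relatively open) Borel set and $\mathbb{P}(S_\delta)$ is well defined. Even without smoothness, mere measurability of $\nabla g$ would suffice, since only $Z\ge 0$ and integrability are needed.

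The core step is the pointwise inequality
\[
\delta\,\mathbb{1}_{S_\delta}(\boldsymbol{\theta}) \le Z(\boldsymbol{\theta}) \qquad \text{for all } \boldsymbol{\theta}\in\Omega_B.
\]
On $S_\delta$ it holds because $Z>\delta$ there. Off $S_\delta$ the left side is $0$ while $Z\ge 0$. Taking expectations with respect to the uniform measure gives
\[
\delta\,\mathbb{P}(\boldsymbol{\theta}\in S_\delta) \le \mathbb{E}[Z] = \mathcal{L}_{\mathrm{grad}} \le \epsilon.
\]
Dividing by $\delta>0$, which is allowed since $\delta\in(0,1)$, yields $\mathbb{P}(\boldsymbol{\theta}\in S_\delta)\le \epsilon/\delta$. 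Because the measure is uniform, this is equivalent to the volume statement $\mathrm{vol}(S_\delta)\le (\epsilon/\delta)\,(2B)^{6J}$.

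There is no real obstacle here. The only points needing care are measurability, handled above, and the fact that $\mathcal{L}_{\mathrm{grad}}$ is the population expectation over $\Omega_B$ rather than the empirical average in \eqref{eq:vanilla-PINNs-theta}. The bound is stated for the former, so no generalization argument is needed.

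It is also worth remarking that on $\Omega_B\setminus S_\delta$ we have $\|\nabla g\|\ge 1-\delta>0$. So, as in Part (i) of Theorem~\ref{thm:global_supp}, LICQ holds outside a set of probability at most $\epsilon/\delta$.
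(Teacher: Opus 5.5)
Your proposal is correct and follows the paper's proof: both apply Markov's inequality to the nonnegative variable $\big|\|\nabla g(\boldsymbol{\theta})\|-1\big|$ under the uniform measure on $\Omega_B$. Your pointwise bound $\delta\,\mathbf{1}_{S_\delta}\le Z$ is just the usual proof of the Markov step, which the paper cites directly after noting $\{X>\delta\}\subseteq\{X\ge\delta\}$, and your extra remarks on measurability and on population versus empirical loss are sound but not needed.
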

\begin{proof}  

Define \(X(\boldsymbol{\theta}) = |\|\nabla g(\boldsymbol{\theta})\| - 1|\) on \(\Omega_B\) with the uniform measure.
Since \(\{X>\delta\}\subseteq\{X\ge \delta\}\), we have

\[
\mathbb{P}(X>\delta)\le \mathbb{P}(X\ge \delta).
\]

 Markov's inequality then gives
\[
     \mathbb{P}(X>\delta)\;\leq\; \mathbb{P}(X \geq \delta) \;\leq\; \frac{\mathbb{E}[X]}{\delta} \;=\; \frac{\mathcal{L}_{\mathrm{grad}}}{\delta} \;\leq\; \frac{\epsilon}{\delta}. 
\]
\end{proof}
\section{The Gap between Theory and Practice}

The convergence guarantees in \cref{sec:convergence_supp} rest on idealized assumptions that are only approximately satisfied in practice. We discuss the main discrepancies below and their implications for the implementation.

\paragraph{Smoothness (Assumption~\ref{asmp:smoothness_supp}).} The smoothness of an MLP is determined by its activation function. Softplus activations yield a $C^\infty$ network, while ReLU and ELU~\cite{clevert2015fast} are smooth almost everywhere. We observe no significant performance difference among these choices in practice.

\paragraph{Feasibility Consistency (Assumption~\ref{asmp:feasibility_supp}) Failure.} As indicated in \cref{sec:classifier}, the accuracy of collision indication is $93.9\%$ on the test set.

\paragraph{Approximate Eikonal (Assumption~\ref{asmp:approx_eikonal_supp}) Failure.} As stated in Proposition~\ref{prop:approx_eikonal_supp}, minimizing $\mathcal{L}_{\mathrm{grad}}$ bounds the volume of the region where Assumption~\ref{asmp:approx_eikonal_supp} fails, but exact satisfaction of this assumption is not guaranteed. \cref{fig:grad} reports the empirical distribution of $\|\nabla g\|$ on the test set: $95\%$ of samples satisfy the approximate Eikonal property with $\delta = 0.1$, suggesting the assumption holds for the vast majority of poses encountered in practice.

\begin{figure}[h]
\centering  
\includegraphics[width=0.7\columnwidth]{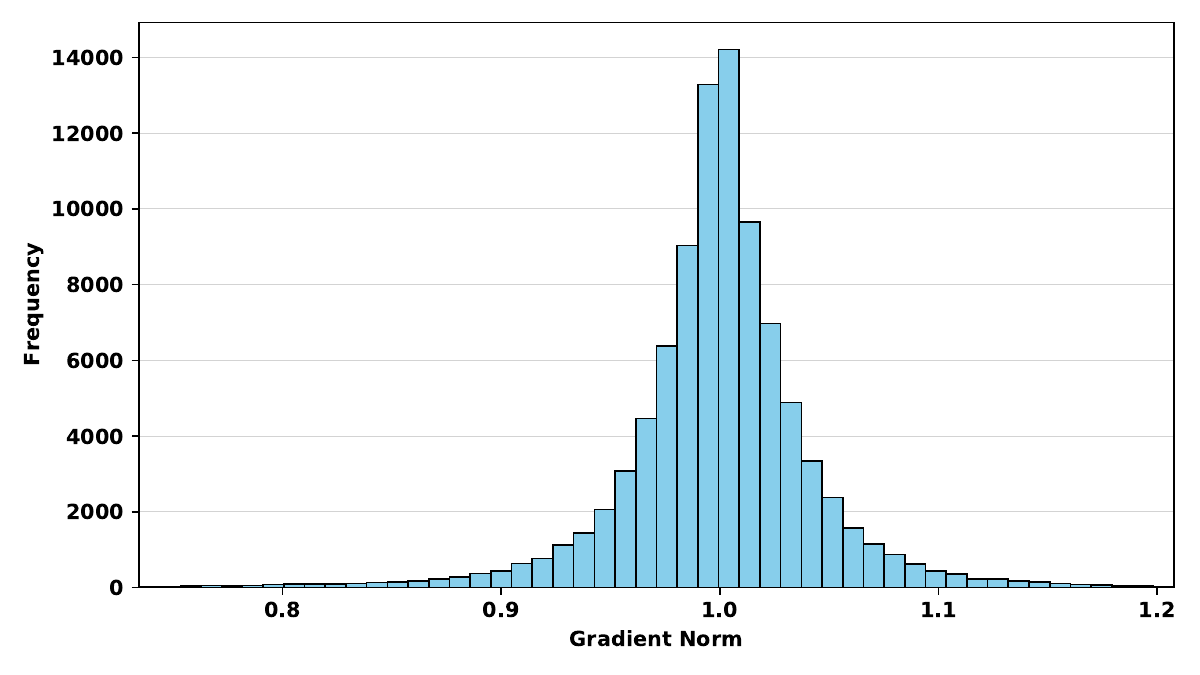}
 \caption{\textbf{Empirical verification of the approximate Eikonal property on the test set ($\approx 92k$ samples).} $0.5\%$ outliers on both sides are removed. $95\%$ of the samples satisfy the approximate Eikonal property with $\delta=0.1$.}
\label{fig:grad}
\end{figure}

\paragraph{Sampling.} The analysis in Proposition~\ref{prop:approx_eikonal_supp} assumes a uniform distribution over $\Omega_B$. In practice, samples are drawn from a data-induced distribution: we add Gaussian noise to existing motion data and project the perturbed poses to valid 6D rotations (\cref{sec:hwc}). This concentrates samples in regions of high practical relevance, though it does not exactly match the uniform measure assumed in the analysis.

\paragraph{Network Input Pre-processing.} Our training data consists solely of normalized 6D rotations (\cref{sec:hwc}), providing poor coverage of the full domain $\Omega_B$. However, unnormalized poses can arise during optimization. To keep inputs in-distribution, we apply Gram--Schmidt orthonormalization to all inputs to $g$, so the network effectively computes $\tilde{g}(\boldsymbol{\theta}) := g(\pi_{6D}(\boldsymbol{\theta}))$. This imposes the constraint that $\tilde{g}$ is constant on the level sets of $\pi_{6D}$:
\begin{equation}
    \tilde{g}(\boldsymbol{\theta}_1) = \tilde{g}(\boldsymbol{\theta}_2),
    \quad
    \forall\, \boldsymbol{\theta}_1, \boldsymbol{\theta}_2 \in \mathcal{D}
    \text{ s.t. }
    \pi_{6D}(\boldsymbol{\theta}_1) = \pi_{6D}(\boldsymbol{\theta}_2),
\end{equation}
restricting the effective input domain of $g$ to $SO(3)^J$ and potentially limiting its approximation capacity over the full $\Omega_B$.

\section{Limitations and Future Work}\label{sec:limit}
Currently, distances between poses and motions are measured solely using geometric metrics. In practical applications, however, users may care more about semantic fidelity. For example, whether a hand is exactly touching the head can be important in certain animations. For such applications, integrating our method with semantic distance metrics would be a valuable direction for future work. Our method can be seamlessly extended to parametric human models beyond SMPL, such as the Momentum Human Rig~\cite{ferguson2025mhr}. However, the current formulation assumes a fixed body shape. This assumption is sufficient for some applications. For example, in digital content creation, a character’s body shape is typically fixed, making it feasible to train the model once and then apply it to that character in diverse scenarios. Nevertheless, other applications may require the learned constraint function to generalize across a range of body shapes. Extending our method to handle varying body shapes is an important direction for future work.

\section{Our Model as a Classifier}
\label{sec:classifier}
In principle, the sign of $g$ indicates the collision status of a sample. Therefore, our method can also be used as a collision detector. We use the following metrics:
\begin{enumerate}
    \item Prediction accuracy (ACC). It measures whether the method can successfully predict the collision label.
    \item False negative rate (FNR). The rate at which a self-colliding mesh is predicted as collision-free.
\end{enumerate}
\begin{table}[h]
    \centering
    \caption{\textbf{Comparison of collision detection on our pose dataset.} 
    $\uparrow$ indicates higher values are better, 
    $\downarrow$ indicates lower values are better.}
    \label{table:cls}
    \resizebox{0.4\linewidth}{!}{
    \begin{tabular}{lcc}
    \toprule
    Method & ACC$\uparrow$ & FNR$\downarrow$  \\ 
    \midrule

    Classifier-baseline & 0.931& 0.035 \\
    \midrule
    Ours & 0.939 & 0.031 \\
    \bottomrule
    \end{tabular}
    }
\end{table}

The results are as shown in~\prettyref{table:cls}. Our method can serve as a classifier, achieving performance comparable to that of a standard binary classifier.

\section{Human Motion Collision Resolution: Implementation Details}
\label{sec:motion_losses}

In practice, to maintain the visual proximity of the optimized motion to the source motion, we define a motion distance term $d_{\text{motion}}$ that operates on the pose representation and the corresponding SMPL joints. Given a motion sequence
\(\mathbf{m} = [\boldsymbol{\theta}^{0}, \boldsymbol{\theta}^{1}, \cdots, \boldsymbol{\theta}^{T}]\)
and the source motion
\(\mathbf{m}_{s} = [\boldsymbol{\theta}^{0}_{s}, \boldsymbol{\theta}^{1}_{s}, \cdots, \boldsymbol{\theta}^{T}_{s}]\),
we preserve the proximity between $\mathbf{m}$ and $\mathbf{m}_s$ in the pose-parameter space using
\begin{align}
\mathcal{L}_{\text{feat}}
&= \frac{1}{T+1}\sum_{t=0}^{T}
    \|\boldsymbol{\theta}^{t} - \boldsymbol{\theta}^{t}_{s}\|_2^2.
\end{align}
To improve motion fidelity in 3D space, we convert each frame \(\boldsymbol{\theta}^{t}\)
into SMPL joint positions.
Specifically, let \(\mathbf{p}^{t} = \mathrm{SMPL}_{J}(\boldsymbol{\theta}^{t})\) and
\(\mathbf{p}^{t}_{s} = \mathrm{SMPL}_{J}(\boldsymbol{\theta}^{t}_{s})\),
where \(\mathrm{SMPL}_{J}(\cdot)\) outputs 3D joints via SMPL forward kinematics.
We supervise both joint configurations and their temporal changes:
\begin{align}
\mathcal{L}_{\text{pos}}
&= \frac{1}{T+1}\sum_{t=0}^{T}
    \|\mathbf{p}^{t} - \mathbf{p}^{t}_{s}\|_2^2 , \\[6pt]
\mathcal{L}_{\text{vel}}
&= \frac{1}{T}\sum_{t=0}^{T-1}
    \|(\mathbf{p}^{t+1} - \mathbf{p}^{t})
     - (\mathbf{p}^{t+1}_{s} - \mathbf{p}^{t}_{s})\|_2^2 .
\end{align}
$d_{\text{motion}}$ is defined as a weighted combination of the above losses:
\begin{equation}
d_{\text{motion}}(\mathbf{m}, \mathbf{m}_{s})
=
\mathcal{L}_{\text{feat}}
+ \lambda_{\text{joint}}\mathcal{L}_{\text{pos}}
+ \lambda_{\text{vel}}\mathcal{L}_{\text{vel}}.
\end{equation}
Here, $\mathcal{L}_{\text{feat}}$ encourages frame-wise similarity in the pose-parameter space,
while $\mathcal{L}_{\text{pos}}$ and $\mathcal{L}_{\text{vel}}$ preserve joint-level fidelity and temporal consistency in 3D. By default, we set $\lambda_{\text{joint}}$ to $1$ and $\lambda_{\text{vel}}$ to $0.1$.

\section{Details of Baseline Implementation}
\paragraph{VolumetricSMPL}
We follow the official implementation and adopt the hyperparameters provided in the paper.
Since only pretrained weights for SMPL-X are released, we map the joint rotations in the test set from SMPL to the corresponding SMPL-X joints and evaluate collisions under the SMPL-X model for a fair comparison.
Samples that exhibit self-collisions in SMPL but not in SMPL-X (140 out of 500 in the HwC benchmark set) are excluded from the evaluation.

\paragraph{COAP}
We follow the official implementation.
Self-collision resolution is implemented in \texttt{tutorials/untangle\_body.py}: body pose is optimized with SGD to minimize the learned self-penetration loss (COAP) plus a pose prior.
We use the hyperparameters provided in the repository (learning rate, pose prior weight, and self-penetration weight).
Optimization stops when the weighted self-penetration loss falls below the script's default threshold or when the maximum number of iterations (200) is reached.

\paragraph{Torch-mesh-isect}
\textit{Torch-mesh-isect}~\cite{tzionas2016capturing} can be found on GitHub. Collision handling is implemented through the file \texttt{examples/batch\_smpl\_untangle.py}. 
Notably, the original implementation does not include an internal stopping mechanism. 
To prevent indefinite execution, we set a maximum runtime of \texttt{3 minutes} for each case.

\paragraph{Classifier Baseline.}
This baseline follows the constrained optimization procedure in \cref{alg:cls_opt}.
We train a binary classifier on the HwC dataset to predict whether an SMPL pose is collision-free.
Let $cls(\boldsymbol{\theta};\phi)\in[0,1]$ denote the predicted probability that
$\iota(\mathcal{M}(\boldsymbol{\beta},\boldsymbol{\theta}))=+1$ under a fixed shape $\boldsymbol{\beta}$.
Given an initial self-colliding pose $\boldsymbol{\theta}_0$, we solve a pose-space constrained optimization problem
using the classifier output as a surrogate feasibility test.
We note that since N-Penetrate~\cite{tan2022n} is not open-sourced, we are not able to directly compare with that method.

\begin{algorithm}[ht]
\caption{\label{alg:cls_opt}Neural Collision Resolution with a Classifier}
\begin{algorithmic}[1]
\Require Initial pose $\boldsymbol{\theta}_0$, binary classifier $cls(\boldsymbol{\theta};\phi)$, constrained solver $\mathcal{F}$
\Ensure Optimized pose $\boldsymbol{\theta}^\star$
\State Initialize optimization with $\boldsymbol{\theta} \gets \boldsymbol{\theta}_0$
\State Define constraint set $\mathcal{C} = \{\, cls(\boldsymbol{\theta};\phi) > 0.5 \,\}$
\State Define objective function $d_{\mathrm{SMPL}}(\boldsymbol{\theta}, \boldsymbol{\theta}_0)$
\State Solve for
\[
\begin{aligned}
\boldsymbol{\theta}^\star = \mathcal{F}(
    &~\text{initial state}=\boldsymbol{\theta}_0, \\
    &~\text{constraints}=\mathcal{C}, \\
    &~\text{objective}=d_{\mathrm{SMPL}}(\boldsymbol{\theta}, \boldsymbol{\theta}_0))
\end{aligned}
\]
\State \Return $\boldsymbol{\theta}^\star$
\end{algorithmic}
\end{algorithm}

\section{Details of Active Learning}
\label{sec:active_learning}

\begin{algorithm}[ht]
\caption{\label{alg:ActiveLearning-a}Active Learning of \ours}
\begin{algorithmic}[1]
\State Sample an initial pose dataset $\mathcal{D}_{\theta}$
\State Train $g(\boldsymbol{\theta})$ using $\mathcal{L}_\text{\ours}$ on $\mathcal{D}_{\theta}$
\While{Not converged}
\State Sample an additional pose set $\mathcal{D}_{\theta}^{+}$
\For{each $\boldsymbol{\theta}_0 \in \mathcal{D}_{\theta}^{+}$}
\State Use $\boldsymbol{\theta}_0$ as the initial guess
\State Solve for $\boldsymbol{\theta}_\dagger,\,\mathcal{D}_{\theta}^{\dagger} \gets 
\argminH_{\boldsymbol{\theta}} \frac{1}{2}\big|g(\boldsymbol{\theta})\big|^2$
\State Augment dataset $\mathcal{D}_{\theta}^{+} \gets \mathcal{D}_{\theta}^{+}\cup \mathcal{D}_{\theta}^{\dagger}$
\EndFor
\State Retrain $g(\boldsymbol{\theta})$ using $\mathcal{L}_\text{\ours}$ on $\mathcal{D}_{\theta}^{+}$
\EndWhile
\end{algorithmic}
\end{algorithm}

In our standard setup, we construct $\mathcal{D}_{\theta}$ via random augmentation based on a seed set in the SMPL pose space. However, this approach often suffers from distribution bias, since the true underlying pose distribution is unknown. More critically, collision resolution requires $g(\boldsymbol{\theta})$ to accurately capture the \emph{decision boundary} of the exact collision indicator $\iota(\mathcal{M}(\boldsymbol{\beta},\boldsymbol{\theta}))$, i.e., the zero-level set $\{\boldsymbol{\theta}\mid g(\boldsymbol{\theta})=0\}$. In contrast, the precise shape of $g(\boldsymbol{\theta})$ far away from the boundary is less important, since those regions are mainly visited during intermediate steps of constrained optimization. Unfortunately, naive augmentation-based sampling does not emphasize this crucial near-boundary region.

To address these limitations, N-Penetrate~\cite{tan2022n} introduced an active-learning strategy that incrementally augments the training set. Following this idea, we let $\argminH$ denote an optimization procedure that returns not only the final solution but also all intermediate iterates encountered during optimization. At each active-learning iteration, we draw pose samples as usual. For each sampled pose $\boldsymbol{\theta}_0$, we solve:
\begin{align}
\label{eq:active-opt}
\argminH_{\boldsymbol{\theta}} \frac{1}{2}\big|g(\boldsymbol{\theta})\big|^2,
\end{align}
where $\argminH$ is used to collect all intermediate poses produced by the optimizer. The final converged solutions approximate the current zero-level set of $g(\boldsymbol{\theta})$, and the collected iterates concentrate samples near the boundary. This improves the accuracy of the learned decision boundary over iterations. The full active-learning pipeline is summarized in~\cref{alg:ActiveLearning-a}. We emphasize that this active-learning strategy is adopted from prior work~\cite{tan2022n} as an implementation detail, and we do not claim it as a contribution of this paper.

\section{More Human Motion Collision Resolution Results}
\label{sec:motion_collision}
\paragraph{Data.} We use $100$ sequences with the largest total penetration depths from the MotionFix dataset~\cite{athanasiou2024motionfix}.

\paragraph{Metrics.} We use the following metrics:
\begin{itemize}
\item Jitter~\cite{yi2022physical} evaluates the smoothness of the motion, measured in units of $10^2m/s^3$.
\item Foot Skating Ratio (FSR)~\cite{karunratanakul2023guided} measures the proportion of frames exhibiting foot skating artifacts. Since aggressive collision resolution can introduce unnatural motion patterns such as foot sliding, FSR serves as an indirect indicator of overall motion quality.
\item Residual Penetration Depth (RPD) measures the severity of residual interpenetration after optimization. It is computed as the frame-averaged penetration depth of the output motion.
\item Motion Feature Distance (MFD) measures the semantic discrepancy between the optimized and original motions in a learned motion feature space. Specifically, we extract motion features using a motion encoder~\cite{meng2025rethinking} and compute the feature-space distance between the optimized motion and its corresponding original motion.
\end{itemize}

\paragraph{Baselines.} We compare our method against two baselines.
\begin{itemize}
    \item Direct motion optimization. An alternative that uses the same optimization objective as ours, but optimizes the motion sequence $\mathbf{m}$ itself instead of the input noise $\mathbf{x}$ to the diffusion model $f$. The optimization process is:
    \begin{align*}
    \mathcal{L}_{\text{smooth}}(\mathbf{m}) &= \frac{1}{T}\sum_{t=0}^{T-1}
    \|\boldsymbol{\theta}^{t+1} - \boldsymbol{\theta}^{t}\|_2^2 . \\
        \mathbf{m}^\star &= \arg\min_{\mathbf{m}} \mathcal{Q}(\mathbf{m}) + \lambda_{\text{smooth}}\mathcal{L}_{\text{smooth}}(\mathbf{m})
    \end{align*}
    We add the additional smoothness term to improve temporal consistency. $\lambda_{\text{smooth}}$ is set to $0.5$.
    \item COAP (DNO)~\cite{mihajlovic2022coap}. We apply the same optimization algorithm DNO and $d_{\text{motion}}$ as our method while replacing the collision term with COAP self-collision loss. To keep the inference time comparable with our method, the number of sampled points is set to $50$ per body part. 
\end{itemize}

\paragraph{Results.}
\begin{table}[t]
\centering
\caption{Quantitative comparison on human motion collision resolution. Bold indicates the best result.}
\label{tab:motion_comparison}
{
\begin{tabular}{lcccc}
\toprule
Method &  Jitter $\downarrow$ & MFD $\downarrow$ & FSR (\%) $\downarrow$ & RPD $\downarrow$ \\
\midrule
GT &  0.3137 & 0.0000 & 5.08 & 1.8945 \\
\midrule
COAP (DNO)  & \textbf{0.5472} & 0.2948 & 3.52 & 0.1274 \\
Direct Opt.  & 2.5184 & \textbf{0.2322} & {5.82} & {0.0828} \\
Ours & {0.5684} & {0.2551} & \textbf{3.12} & \textbf{0.0738} \\
\bottomrule
\end{tabular}}
\end{table}

Quantitative results are reported in \cref{tab:motion_comparison}.
Our method provides the best balance between collision removal and motion quality. Direct optimization is the fastest and stays closest to the input motion in feature space, but leaves more residual collisions and introduces more jitter and foot skating. This comparison highlights the advantage of the generative prior in preserving more natural motion during collision resolution. COAP (DNO) improves smoothness over direct optimization, but still leaves more residual penetration and foot skating. Since our method and COAP (DNO) share the same optimization framework and motion objective, this gap suggests that our learned collision loss is more effective.

\bibliographystyle{splncs04}
\bibliography{main}
\end{document}